\def\eqref#1{equation~\ref{#1}}
\def\1{\bm{1}}
\DeclareMathAlphabet{\mathsfit}{\encodingdefault}{\sfdefault}{m}{sl}
\SetMathAlphabet{\mathsfit}{bold}{\encodingdefault}{\sfdefault}{bx}{n}
\newcommand{\R}{\mathbb{R}}
\DeclareMathOperator*{\argmin}{arg\,min}
\title{Multitask Online Mirror Descent}
\author{\name Nicolò Cesa-Bianchi \email \href{mailto:nicolo.cesa-bianchi@unimi.it}{nicolo.cesa-bianchi@unimi.it} \\
      \addr Department of Computer Science, University of Milan, Italy
      \AND
      \name Pierre Laforgue \email \href{mailto:pierre.laforgue@unimi.it}{pierre.laforgue@unimi.it} \\
      \addr Department of Computer Science, University of Milan, Italy
      \AND
      \name Andrea Paudice \email \href{mailto:andrea.paudice@unimi.it}{andrea.paudice@unimi.it}  \\
      \addr Department of Computer Science, University of Milan, Italy \\
            Istituto Italiano di Tecnologia, Genova, Italy
      \AND
      \name Massimiliano Pontil \email \href{mailto:massimiliano.pontil@iit.it}{massimiliano.pontil@iit.it} \\
      \addr Istituto Italiano di Tecnologia, Genova, Italy \\
            University College London, United Kingdom
}
\begin{document}

\maketitle

\begin{abstract}
We introduce and analyze MT-OMD, a multitask generalization of Online Mirror Descent (OMD) which operates by sharing updates \mbox{between} tasks.
We prove that the regret of MT-OMD is of order $\sqrt{1 + \sigma^2(N-1)}\sqrt{T}$, where $\sigma^2$ is the task variance according to the geometry induced by the regularizer, $N$ is the number of tasks, and $T$ is the time horizon.
Whenever tasks are similar, that is $\sigma^2 \le 1$, our method improves upon the $\sqrt{NT}$ bound obtained by running independent OMDs on each task.
We further provide a matching lower bound, and show that our multitask extensions of Online Gradient Descent and Exponentiated Gradient, two major instances of OMD, enjoy closed-form updates, making them easy to use in practice.
Finally, we present experiments which support our theoretical findings.
\end{abstract}

\section{Introduction}
\label{sec:Introduction}

In multitask learning \citep{caruana1997}, one faces a set of tasks to solve, and tries to leverage their similarities to learn faster.
Task similarity is often formalized in terms of Euclidean distances among the best performing models for each task, see \citet{evgeniou2004} for an example.
However, in online convex optimization, and Online Mirror Descent (OMD) in particular, it is well known that using different geometries to measure distances in the model space can bring substantial advantages --- see, e.g., \citet{hazan2016,orabona2019}.
For instance, when the model space is the probability simplex in $\mathbb{R}^d$, running OMD with the KL divergence (corresponding to an entropic regularizer) allows one to learn at a rate depending only logarithmically on $d$.
It is thus natural to investigate to what extent measuring task similarities using geometries that are possibly non-Euclidean could improve the analysis of online multitask learning.
From an application perspective, typical online multitask scenarios include federated learning applications for mobile users (e.g., personalized recommendation or health monitoring) or for smart homes (e.g., energy consumption prediction), mobile sensor networks for environmental monitoring, or even networked weather forecasting.
These scenarios fit well with online learning, as new data is being generated all the time, and require different losses and decision sets, motivating the design of a general framework.

In this work, we introduce MT-OMD, a multitask generalization of OMD which applies to any strongly convex regularizer.
We present a regret analysis establishing that MT-OMD outperforms OMD (run independently on each task) whenever tasks are similar according to the geometry induced by the regularizer.
Our work builds on the multitask extension of the Perceptron algorithm developed in \citet{cavallanti2010}, where prior knowledge about task similarities is expressed through a symmetric positive definite interaction matrix $A$.
Typically, $A = I + L$, where $L$ is the Laplacian of a task relatedness graph with adjacency matrix $W$.
The authors then show that the number of mistakes depends on $\sum_i\|u_i\|_2^2 + \sum_{i,j} W_{ij}\|u_i - u_j\|^2_2$, where each $u_i$ denotes the best model for task $i$.
This expression can be seen as a measure of task dispersion with respect to matrix $W$ and norm $\|\cdot\|_2$.
The Euclidean norm appears because the Perceptron is an instance of OMD for the hinge loss with the Euclidean regularizer, so that distances in the model space are measured through the corresponding Bregman divergence, which is the Euclidean squared norm.

For an arbitrary strongly convex regularizer $\psi$, the regret of OMD is controlled by a Bregman divergence and a term inversely proportional to the curvature of the regularizer.
The key challenge we face is how to extend the OMD regularizer to the multitask setting so that the dispersion term captures task similarities.
A natural strategy would be to choose a regularizer whose Bregman divergence features $\sum_{i,j} W_{ij} B_\psi(u_i, u_j)$.
\mbox{Although} this mimics the Euclidean dispersion term of the Perceptron, the associated regularizer has a small curvature, compromising the divergence-curvature balance which, as we said, controls the regret.
Observing that the Perceptron's dispersion term can be rewritten $\|\sqA\bu\|_2^2$, where $\bA$ is a block version (across tasks) of $A$ and $\bu$ is the concatenation of the reference vectors $u_i$, our solution consists in using the regularizer $\bpsi\big(\sqA \cdot\big)$, where $\bpsi$ is the compound version of any base regularizer $\psi$ defined on the model space.
While exhibiting the right curvature, this regularizer has still the drawback that $\sqA\bu$ might be outside the domain of $\bpsi$.
To get around this difficulty, we introduce a notion of variance aligned with the geometry induced by $\bpsi$, such that the corresponding Bregman divergence $B_\bpsi\big(\sqA\bu,\sqA\bv\big)$ is always defined for sets of tasks with small variance.
We then show that the Bregman divergence can be upper bounded in terms of the task variance $\sigma^2$, and by tuning appropriately the matrix $A$ we obtain a regret bound for MT-OMD that scales as $\sqrt{1 + \sigma^2(N-1)}$.
In contrast, the regret of independent OMD scales as $\sqrt{N}$, highlighting the advantage brought by MT-OMD when tasks have a small variance.
We stress that this improvement is independent of the chosen regularizer, thereby offering a substantial acceleration in a wide range of scenarios.
To keep the exposition simple, we first work with a fixed and known $\sigma^2$.
We then show an extension of MT-OMD that does not require any prior knowledge on the task similarity.
The rest of the paper is organized as follows.
In \Cref{sec:MultitaskOnlineLearning}, we introduce the multitask online learning problem and describe MT-OMD, our multitask extension to solve it.
In \Cref{sec:RegretAnalysis}, we derive a regret analysis for MT-OMD, which highlights its advantage when tasks are similar.
\Cref{sec:Algorithms} is devoted to algorithmic implementations, and \Cref{sec:exps} to experiments.

\textbf{Related work.}
Starting from the seminal work by \citet{caruana1997}, multitask learning has been intensively studied for more than two decades, see \citet{zhang2021} for a recent survey.
Similarly to \citet{cavallanti2010}, our work is inspired by the Laplacian multitask framework of \citet{evgeniou2005}.
This framework has been extended to kernel-based learning \citep{sheldon2008}, kernel-based unsupervised learning \citep{gu2011}, contextual bandits \citep{cesa2013}, spectral clustering \citep{yang2014}, stratified model learning \citep{tuck2021}, and, more recently, federated learning \citep{dinh2021}.
See also \citet{herbster2009} for different applications of Laplacians in online learning.
A multitask version of OMD has been previously proposed by \citet{kakade2012}.
Their approach, unlike ours, is cast in terms of matrix learning, and uses group norms and Schatten $p$-norm regularizers.
Their bounds scale with the diameter of the model space according to these norms (as opposed to scaling with the task variance, as in our analysis).
Moreover, their learning bias is limited to the choice of the matrix norm regularizer and does not explicitly include a notion of task similarity matrix.
\citet{abernethy2007,dekel2007} investigate different multitask extensions of online learning, see also \citet{alquier2017,finn2019,balcan2019,denevi2019} for related extensions to meta-learning.
Some online multitask applications are studied in \citet{pillonetto2008,li2014,li2019}, but without providing any regret analyses.
\citet{saha2011,zhang2018} extend the results of \citet{cavallanti2010} to dynamically updated interaction matrices.
However, no regret bounds are provided.
\citet{murugesan2016} look at distributed online classification and prove regret bounds, but they are not applicable in our asynchronous model.
Other approaches for learning task similarities include \citet{zhang2010,pentina2017,gagne2019}.
We finally note the recent work by \citet{boursier2022trace}, which establishes multitask learning guarantees with trace norm regularization when the number of samples per task is small, and that by \citet{laforgue2022adatask}, which learns jointly the tasks and their structure, but only with the Euclidean regularizer and under the assumption that the task activations are stochastic.

Although our asynchronous multitask setting is identical to that of \citet{cavallanti2010}, we emphasize that our work extends theirs much beyond the fact that we consider arbitrary convex losses instead of just the hinge loss.
Algorithmically, MT-OMD generalizes the Multitask Perceptron in much the same way OMD generalizes the standard Perceptron.
From a technical point of view, Theorem 1 in \citet{cavallanti2010} is a direct consequence of the Kernel Perceptron Theorem, and is therefore limited to Euclidean geometries. Instead, our work provides a complete analysis of all regularizers of the form $\bpsi(\sqA \cdot)$.
Although \citet{cavallanti2010} also contains a non-Euclidean $p$-norm extension of the Multitask Perceptron, we point out that their extension is based on a regularizer of the form $\|\bA \bu\|_p^2$.
This is different from MT-OMD for $p$-norms, which instead uses $\sum_i \|(\sqA\bu)^{(i)}\|_p^2$.
As a consequence, their bound is worse than ours (see \Cref{apx:cavallanti} for technical details), does not feature any variance term, and does not specialize to the Euclidean case when $p=2$.
Note that our analysis on the simplex is also completely novel as far as we know.
\section{Multitask Online Learning}
\label{sec:MultitaskOnlineLearning}

We now describe the multitask online learning problem, and introduce our approach to solve it.
We use a cooperative and asynchronous multiagent formalism: the online algorithm is run in a distributed fashion by communicating agents, that however make predictions at different time steps.


\paragraph{\bf Problem formulation and reminders on OMD.}
We consider an online convex optimization setting with a set of $N \in \mathbb{N}$ agents, each learning a possibly different task on a common convex decision set $V \subset \mathbb{R}^d$.
At each time step $t = 1, 2, \ldots$ some agent $i_t \le N$ makes a prediction $x_t \in V$ for its task, incurs loss $\ell_t(x_t)$, and observes a subgradient of $\ell_t$ at $x_t$, where $\ell_t$ is a convex loss function.
We say that $i_t$ is the active agent at time $t$. Both the sequence $i_1,i_2,\ldots$ of active agents and the sequence $\ell_1,\ell_2,\ldots$ of convex losses are chosen adversarially and hidden from the agents.
Note that the algorithm we propose is deterministic, such that $\ell_t$ might be indifferently chosen before $x_t$ is predicted (oblivious adversary) or after.
Our goal is to minimize the {\em multitask regret}, which is defined as the sum of the individual regrets
\begin{equation}
\label{eq:regret}
R_T = \sum_{i=1}^N\left(\sum_{t \colon i_t = i} \ell_t(x_t) - \inf_{u \in V}\sum_{t \colon i_t = i} \ell_t(u)\right) = \sum_{t=1}^T \ell_t(x_t) - \sum_{i=1}^N \inf_{u \in V} \sum_{t\colon i_t=i} \ell_t(u)\,.
\end{equation}
A natural idea to minimize \Cref{eq:regret} is to run $N$ independent OMDs, one for each agent.
Recall that OMD refers to a family of algorithms, typically used to minimize a regret of the form $\sum_t \ell_t(x_t) - \inf_{u \in V} \sum_t \ell_t(u)$, for any sequence of proper convex loss functions $\ell_t$.
An instance of OMD is parameterized by a $\lambda$-strongly convex regularizer $\psi\colon \mathbb{R}^d \rightarrow \mathbb{R}$, and has the update rule
\begin{equation}\label{eq:omd_update}
x_{t+1} = \argmin_{x \in V} ~~ \langle \eta_t g_t, x \rangle + B_\psi(x, x_t)\,,
\end{equation}
where $g_t \in \mathbb{R}^d$ is a subgradient of $\ell_t$ at point $x_t$, $B_\psi(x,y) = \psi(x) - \psi(y) - \langle \nabla\psi(y), x-y \rangle$ denotes the Bregman divergence associated to $\psi$, and $\eta_t > 0$ is a tunable learning rate.
Standard results allow to bound the regret achieved by the sequence of iterates produced by OMD.
For a fixed $\eta$ and any initial point $x_1 \in V$, we have \citep[Theorem 6.8]{orabona2019} that for all $u \in V$
\begin{equation}\label{eq:std_omd_bound}
\sum_{t=1}^T \ell_t(x_t) - \ell_t(u) \le \frac{B_\psi(u, x_1)}{\eta} + \frac{\eta}{2\lambda} \sum_{t=1}^T \|g_t\|_\star^2\,,
\end{equation}
with $\|\cdot\|_\star$ the dual norm of the norm with respect to which $\psi$ is strongly convex (see \Cref{def:dual_norm} in the Appendix).
The choice of the regularizer $\psi$ shapes the above bound through the quantities $B_\psi(u, x_1)$ and $\|g_t\|_\star$.
When $\psi = \frac{1}{2}\|\cdot\|_2^2$, we have $B_\psi(x,y) = \frac{1}{2}\|x-y\|_2^2$, $\|\cdot\|_\star = \|\cdot\|_2$, $\lambda=1$, and the algorithm is called Online Gradient Descent (OGD).
However, depending on the problem, a different choice of the regularizer might better captures the underlying geometry. A well-known example is Exponentiated Gradient (EG), an instance of OMD in which $V$ is the probability simplex in $\mathbb{R}^d$, such that $V = \Delta \coloneqq \{x \in \mathbb{R}_+^d\colon \sum_j x_j = 1\}$.
EG uses the negative entropy regularizer $x \mapsto \sum_j x_j \ln(x_j)$, and assuming that $\|g_t\|_\infty \le L_g$, one achieves bounds of order $\mathcal{O}(L_g\sqrt{T \ln d})$, while OGD yields bounds of order $\mathcal{O}(L_g\sqrt{dT})$.
We emphasize that our cooperative extension adapts to several types of regularizers, and can therefore exploit these improvements with respect to the dependence on $d$, see \Cref{prop:regret_eg}.
Let $C$ be a generic constant such that $C\sqrt{T}$ bounds the regret incurred by the chosen OMD (e.g., $C = L_g\sqrt{\ln d}$, or $C = L_g\sqrt{d}$ above).
Then, by Jensen's inequality the multitask regret of $N$ independent OMDs satisfies
\begin{equation}\label{eq:baseline}
R_T \le \sum_{i=1}^N C\sqrt{T_i} \le C \sqrt{NT}\,,
\end{equation}
where $T_i = \sum_{t=1}^T \mathbb{I}\{i_t = i\}$ denotes the number of times agent $i$ was active.
Our goal is to show that introducing communication between the agents may significantly improve on \Cref{eq:baseline} with respect to the dependence on $N$.


\paragraph{\bf A multitask extension.}
We now describe our multitask OMD approach.
To gain some insights on it, we first focus on OGD.  
For $i \le N$ and $t \le T$, let $x_{i, t} \in \mathbb{R}^d$ denote the prediction maintained by agent $i$ at time step $t$.
By completing the square in \Cref{eq:omd_update} for $\psi = \psi_\mathrm{Euc} \coloneqq \frac{1}{2}\|\cdot\|_2^2$, the independent OGDs updates can be rewritten for all $i \le N$ and $t$ such that $i_t=i$:
\begin{equation}\label{eq:ogd_update}
x_{i, t+1} = \Pi_{V, \|\cdot\|_2} \big(x_{i, t} - \eta_t g_t\big)\,,
\end{equation}
where $\Pi_{V, \|\cdot\|}$ denotes the projection operator onto the convex set $V$ according to the norm $\|\cdot\|$, that is $\Pi_{V, \|\cdot\|}(x) = \argmin_{y \in V} \|x - y\|$.
Our analysis relies on \emph{compound representations}, that we explain next.
We use bold notation to refer to compound vectors, such that for $u_1, \ldots, u_N \in \mathbb{R}^d$, the compound vector is $\bu = [u_1, \ldots, u_N] \in \mathbb{R}^{Nd}$.
For $i \le N$, we use $\bu^{(i)}$ to refer to the $i^\text{th}$ block of $\bu$, such that $\bu^{(i)} = u_i$ in the above example.
So $\bx_t$ is the compound vector of the $(x_{i,t})_{i=1}^N$, such that $x_t = \bx_t^{(i_t)}$, and the multitask regret rewrites as $R_T(\bu) = \sum_{t=1}^T \ell_t\big(\bx_t^{(i_t)}\big) - \ell_t\big(\bu^{(i_t)}\big)$.
For any set $V \subset \mathbb{R}^d$, let $\bV = V^{\otimes N} \subset \mathbb{R}^{Nd}$ denote the compound set such that $\bu \in \bV$ is equivalent to $\bu^{(i)} \in V$ for all $i \le N$.
Equipped with this notation, the independent OGD updates \Cref{eq:ogd_update} rewrite as
\begin{equation}\label{eq:ogd_update_compound}
\bx_{t+1} = \Pi_{\bV, \|\cdot\|_2}\big(\bx_t - \eta_t \bm{g}_t\big)\,,
\end{equation}
with $\bm{g}_t \in \mathbb{R}^{Nd}$ such that $\bm{g}_t^{(i)} = g_t$ for $i = i_t$, and $0_{\mathbb{R}^d}$ otherwise.
In other words, only the active agent has a non-zero gradient and therefore makes an update.
Our goal is to incorporate communication into this independent update.
To that end, we consider the general idea of \textit{sharing updates} by considering (sub) gradients of the form $\Ai \bm{g}_t$, where $\Ai \in \mathbb{R}^{Nd \times Nd}$ is a shortcut notation for $A^{-1} \otimes I_d$ and $A \in \mathbb{R}^{N \times N}$ is any symmetric positive definite interaction matrix.
Note that $A$ is a parameter of the algorithm playing the role of a learning bias.
While our central result (\mbox{Theorem}~\ref{thm:gen_regret}) holds for any choice of $A$, our more specialized bounds (see Propositions~\ref{prop:regret_norms} to~\ref{prop:regret_eg}) apply to a parameterized family of matrices $A$.
A simple computation shows that $(\Ai \bm{g}_t)^{(i)} = A^{-1}_{ii_t} g_t$.
Thus, every agent $i$ makes an update proportional to $A^{-1}_{ii_t}$ at each time step $t$.
In other words, the active agent (the only one to suffer a loss) shares its update with the other agents.
Results in \Cref{sec:RegretAnalysis} are proved by designing a matrix $A^{-1}$ (or equivalently $A$) such that $A^{-1}_{ii_t}$ captures the similarity between tasks $i$ and $i_t$.
Intuitively, the active agent $i_t$ should share its update (gradient) with another agent $i$ to the extent their respective tasks are similar.
Overall, denoting by $\|u\|_M = \sqrt{u^\top M u}$ the Mahalanobis norm of $u$, the MT-OGD update writes
\begin{equation}\label{eq:ogd_update_coop}
\bx_{t+1} = \Pi_{\bV, \|\cdot\|_{\bA}}\big(\bx_t - \eta_t \Ai \bm{g}_t\big)\,.
\end{equation}

In comparison to \Cref{eq:ogd_update_compound}, the need for changing the norm in the projection, although unclear at first sight, can be explained in multiple ways.
First, it is key to the analysis, as we see in the proof of \Cref{thm:gen_regret}.
Second, it can be interpreted as another way of exchanging information between agents, see \Cref{rmk:substitution}.
Finally, note that update \Cref{eq:ogd_update_coop} can be decomposed as
\begin{equation}\label{eq:ogd_update_decompo}
\begin{aligned}
\widetilde{\bx}_{t+1} &= \argmin_{\bx \in \mathbb{R}^{Nd}} ~ \langle \eta_t \bm{g}_t, \bx \rangle + \frac{1}{2} \|\bx - \bx_t\|_{\bA}^2\,,\\
\bx_{t+1} &= \argmin_{\bx \in \bV} ~ \frac{1}{2}\|\bx - \widetilde{\bx}_{t+1}\|_{\bA}^2\,,
\end{aligned}
\end{equation}
showing that it is natural to keep the same norm in both updates.
Most importantly, what \Cref{eq:ogd_update_decompo} tells us, is that the MT-OGD update rule is actually an OMD update---see e.g., \citep[Section~6.4]{orabona2019}---with regularizer $\bx \mapsto \frac{1}{2}\|\bx\|_{\bA}^2 = \bm{\psi}_\text{Euc}\big(\sqA\bx\big)$.
This provides a natural path for extending our multitask approach to any regularizer.
Given a base regularizer $\psi\colon \mathbb{R}^d \rightarrow \mathbb{R}$, the {\em  compound regularizer} $\bm{\psi}$ is given by $\bm{\psi}\colon \bx \in \mathbb{R}^{Nd} \mapsto \sum_{i=1}^N \psi\big(\bx^{(i)}\big)$.
When there exists a function $\phi\colon \mathbb{R} \rightarrow \mathbb{R}$ such that $\psi(x) = \sum_j\phi(x_j)$, the compound regularizer is the natural extension of $\psi$ to $\mathbb{R}^{Nd}$.
Note, however, that the relationship can be more complex, e.g., when $\psi(x) = \frac{1}{2}\|x\|_p^2$.
Using regularizer $\bpsi\big(\sqA \cdot\big)$, whose associate divergence is $B_\bpsi\big(\sqA \bx, \sqA\bx'\big)$, the MT-OMD update thus reads
\begin{equation}\label{eq:omd_update_coop}
\bx_{t+1} = \argmin_{\bx \in \bV} ~ \langle \eta_t\bm{g}_t, \bx\rangle + B_\bpsi\big(\sqA\bx, \sqA\bx_t\big)\,.
\end{equation}
Clearly, if $\psi = \psi_\text{Euc}$, we recover the MT-OGD update.
Observe also that whenever $A = I_N$, MT-OMD is equivalent to $N$ independent OMDs.
We conclude this exposition with a remark shedding light on the way MT-OMD introduces communication between agents.
\begin{remark}[Communication mechanism in MT-OMD]\label{rmk:substitution}
Denoting $\by_t = \sqA \bx_t$, \Cref{eq:omd_update_coop} rewrites
\begin{equation}\label{eq:update_sub}
\bx_{t+1} = \sqAi \argmin_{\by \in \sqA(\bV)} ~ \langle \eta_t \sqAi \bm{g}_t, \by\rangle + B_{\bm{\psi}}(\by,\by_t)\,.
\end{equation}
The two occurrences of $\sqAi$ reveal that agents communicate in two distinct ways: one through the shared update (the innermost occurrence of $\sqAi$), and one through computing the final prediction $\bx_{t+1}$ as a linear combination of the solution to the optimization problem.
Multiplying \Cref{eq:update_sub} by $\sqA$, MT-OMD can also be seen as standard OMD on the transformed iterate $\by_t$.
\end{remark}

\section{Regret Analysis}
\label{sec:RegretAnalysis}


We now provide a regret analysis for MT-OMD.
We start with a general theorem presenting two bounds, for constant and time-varying learning rates.
These results are then instantiated to different types of regularizer and variance in \Cref{prop:regret_ogd,prop:lower_bound,prop:separation,prop:adapt,prop:smooth,prop:regret_norms,prop:regret_eg}.
The main difficulty is to characterize the strong convexity of $\bpsi\big(\sqA\cdot\big)$, see \Cref{lem:strong,lem:dual} in the Appendix.
Throughout the section, $V \subset \mathbb{R}^d$ is a convex set of comparators, and $(\ell_t)_{t=1}^T$ is a sequence of proper convex loss functions chosen by the adversary.
Note that all technical proofs can be found in \Cref{apx:proofs}.

\begin{theorem}\label{thm:gen_regret}
Let $\psi\colon \mathbb{R}^d \rightarrow \overline{\mathbb{R}}$ be $\lambda$-strongly convex with respect to norm $\|\cdot\|$ on $V$, let $A \in \mathbb{R}^{N\times N}$ be symmetric positive definite, and set $\bx_1 \in \bV$.
Then, MT-OMD with $\eta_t \coloneqq \eta$ produces a sequence of iterates $(\bx_t)_{t=1}^T$ such that for all $\bu \in \bV$, $R_T(\bu)$ is bounded by
\begin{equation}\label{eq:bound_gen_regret_cst}
\frac{B_\bpsi\big(\sqA\bu, \sqA\bx_1\big)}{\eta} ~+~ \max\limits_{i \le N} A^{-1}_{ii}~\frac{\eta}{2\lambda}\sum_{t=1}^T \|g_t\|_\star^2\,.
\end{equation}
Moreover, for any sequence of nonincreasing learning rates $(\eta_t)_{t=1}^T$, MT-OMD produces a sequence of iterates $(\bx_t)_{t=1}^T$ such that for all $\bu \in \bV$, $R_T(\bu)$ is bounded by
\begin{equation}\label{eq:bound_gen_regret_var}
\max_{t \le T}\frac{B_\bpsi\big(\sqA\bu, \sqA\bx_t\big)}{\eta_T} ~+~ \max\limits_{i \le N} A^{-1}_{ii}~\frac{1}{2\lambda} \sum_{t=1}^T \eta_t\|g_t\|_\star^2\,.
\end{equation}
\end{theorem}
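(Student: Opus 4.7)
The plan is to recognize MT-OMD as an ordinary OMD running on the compound space $\mathbb{R}^{Nd}$, endowed with the regularizer $\bm{\Psi}(\bx) := \bpsi\bigl(\sqA\bx\bigr)$ and with surrogate losses $\bar{\ell}_t(\bx) := \ell_t\bigl(\bx^{(i_t)}\bigr)$. Since $\bar{\ell}_t$ depends only on block $i_t$, the vector $\bar g_t$ (whose block $i_t$ is $g_t$ and whose other blocks vanish) is a subgradient of $\bar{\ell}_t$ at $\bx_t$; moreover $\sum_t\bigl(\bar{\ell}_t(\bx_t)-\bar{\ell}_t(\bu)\bigr)$ is exactly $R_T(\bu)$ as defined in \eqref{eq:regret}. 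Because $\sqA$ is symmetric, the chain rule gives $\nabla\bm{\Psi}(\bx) = \sqA\,\nabla\bpsi(\sqA\bx)$, which a direct computation turns into the change-of-variables identity $B_{\bm{\Psi}}(\bu,\bv) = B_{\bpsi}\bigl(\sqA\bu,\sqA\bv\bigr)$. Substituting this identity into \eqref{eq:omd_update_coop} shows that the MT-OMD step coincides with a standard OMD step on $\bV$ with regularizer $\bm{\Psi}$ and subgradient $\bar g_t$.

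Once this reduction is in place, I would plug into the textbook OMD bound \eqref{eq:std_omd_bound}. Two ingredients remain: (i) a strong-convexity modulus for $\bm{\Psi}$ on $\bV$, and (ii) the corresponding dual norm evaluated on $\bar g_t$. These are exactly what the lemmas advertised right before the theorem, \Cref{lem:strong,lem:dual}, are designed to supply: the expected outcome is that $\bm{\Psi}$ is $\lambda$-strongly convex with respect to a compound norm whose dual, evaluated on a vector supported on block $i_t$, reduces to $A^{-1}_{i_t i_t}\|g_t\|_\star^2$. Using the uniform upper bound $A^{-1}_{i_t i_t} \le \max_{i \le N} A^{-1}_{ii}$ then yields exactly the second term in \eqref{eq:bound_gen_regret_cst}, while the Bregman term becomes $B_{\bpsi}(\sqA\bu,\sqA\bx_1)/\eta$ by the change-of-variables identity above, completing the constant learning rate bound.

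For the time-varying bound \eqref{eq:bound_gen_regret_var} I would reuse the same reduction and rely on the classical Abel-summation (telescoping) argument for OMD. The per-step inequality derived from convexity of $\bm{\Psi}$ reads
\begin{equation*}
\bar{\ell}_t(\bx_t) - \bar{\ell}_t(\bu) \;\le\; \frac{B_{\bm{\Psi}}(\bu,\bx_t) - B_{\bm{\Psi}}(\bu,\bx_{t+1})}{\eta_t} + \frac{\eta_t}{2\lambda}\,\bigl\|\bar g_t\bigr\|_{\bm{\Psi},\star}^2.
\end{equation*}
Summing over $t$ and reorganizing introduces coefficients $(1/\eta_{t+1} - 1/\eta_t)$ in front of each $B_{\bm{\Psi}}(\bu,\bx_{t+1})$; these coefficients are nonnegative because $(\eta_t)$ is nonincreasing, so I can upper-bound each term by $\max_{t \le T} B_{\bpsi}(\sqA\bu,\sqA\bx_t)$ and collapse the sum to $\max_t B_{\bpsi}(\sqA\bu,\sqA\bx_t)/\eta_T$. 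The gradient side is handled as in the constant-step case, giving the second term of \eqref{eq:bound_gen_regret_var}.

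The main obstacle, and the reason \Cref{lem:strong,lem:dual} are flagged in the statement, is the strong-convexity analysis of $\bm{\Psi} = \bpsi\bigl(\sqA \cdot\bigr)$: one must identify a compound norm on $\mathbb{R}^{Nd}$ under which $\bm{\Psi}$ inherits the modulus $\lambda$ of the base $\psi$ \emph{and} whose dual is finely tuned to block-supported vectors, so that precisely $A^{-1}_{i_t i_t}\|g_t\|_\star^2$ (and not a looser aggregate across tasks) appears in the bound. Once this calibration is secured, the remainder of the argument is routine OMD machinery applied on the compound space.
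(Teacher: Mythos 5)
Your proposal is correct and follows essentially the same route as the paper: reduce MT-OMD to standard OMD on the compound space with regularizer $\bpsi\bigl(\sqA\cdot\bigr)$, use the change-of-variables identity for the Bregman divergence, and invoke \Cref{lem:strong,lem:dual} to get $\lambda$-strong convexity with respect to the Mahalanobis mixed norm whose dual, applied to the block-supported $\bar g_t$, yields exactly $A^{-1}_{i_t i_t}\|g_t\|_\star^2 \le \max_i A^{-1}_{ii}\|g_t\|_\star^2$. The only cosmetic difference is that you phrase the first step via surrogate losses $\bar\ell_t$ while the paper linearizes directly with $\langle\bar g_t,\bx_t-\bu\rangle$; both then apply the textbook OMD bounds for fixed and nonincreasing learning rates.
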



\subsection{Multitask Online Gradient Descent}
For $\psi = \frac{1}{2}\|\cdot\|_2^2$, $A = I_N$ (independent updates), unit-norm reference vectors $(\bu^{(i)})_{i=1}^N$, $L_g$-Lipschitz losses, and $\bx_1 = \bm{0}$, bound \Cref{eq:bound_gen_regret_cst} becomes: $ND^2/2\eta + \eta TL_g^2/2$.
Choosing~$\eta = D\sqrt{N}/L_g\sqrt{T}$, we recover the $DL_g\sqrt{NT}$ bound of \Cref{eq:baseline}.
Our goal is to design interaction matrices $A$ that make \Cref{eq:bound_gen_regret_cst} smaller.
In the absence of additional assumptions on the set of comparators, it is however impossible to get a systematic improvement: the bound is a sum of two terms, and introducing interactions typically reduces one term but increases the other.
To get around this difficulty, we introduce a simple condition on the task similarity, that allows us to control the increase of $B_\bpsi\big(\sqA\bu, \sqA\bx_1\big)$ for a carefully designed class of interaction matrices.

\begin{definition}\label{def:var_norm}
Let $\|\cdot\|\colon \mathbb{R}^d \rightarrow \mathbb{R}$ be any norm, and $\bar{\bu} = (1/N)\sum_{i=1}^N \bu^{(i)}$, for any $\bu \in \mathbb{R}^{Nd}$.
We define the variance of $\bu$ w.r.t.\ $\|\cdot\|$ as
\begin{equation*}
\mathrm{Var}_{\|\cdot\|}(\bu) = \frac{1}{N-1} \sum_{i=1}^N \big\|\bu^{(i)} - \bar{\bu}\big\|^2\,.
\end{equation*}
Let $D = \sup_{u \in V}\|u\|$, and $\sigma > 0$.
The comparators with variance smaller than $\sigma^2D^2$ are denoted by
\begin{equation}\label{eq:compet}
\bV_{\|\cdot\|, \sigma} = \big\{ \bu \in \bV \colon \mathrm{Var}_{\|\cdot\|}(\bu) \le \sigma^2D^2 \big\}\,.
\end{equation}
\end{definition}

For sets of comparators of the form \Cref{eq:compet}, we show that MT-OGD achieves significant improvements over its independent counterpart.
The rationale behind this gain is fairly natural: the tasks associated with comparators in \Cref{eq:compet} are similar due to the variance constraint, so that communication indeed helps.
Note that condition \Cref{eq:compet} does not enforce any restriction on the norms of the individual $\bu^{(i)}$, and is much more complex than a simple rescaling of the feasible set by $\sigma^2$.
For instance, one could imagine task vectors highly concentrated around some vector $u_0$, whose norm is $D$: the individual norms are close to $D$, but the task variance is small.
This is precisely the construction used in the separation result (\Cref{prop:separation}).
As MT-OMD leverages the additional information of the task variance (unavailable in the independent case), it is expected that an improvement \emph{should be} possible.
The problems of how to use this extra information and what improvement can be achieved through it are addressed in the rest of this section.
To that end, we first assume $\sigma^2$ to be known.
This assumption can be seen as a learning bias, analog to the knowledge of the diameter $D$ in standard OGD bounds.
In \Cref{subsec:adaptivity}, we then detail a Hedge-based extension of MT-OGD that does not require the knowledge of $\sigma^2$ and only suffers an additional regret of order $\sqrt{T \log N}$.

The class of interaction matrices we consider is defined as follows.
Let $L = I_N - \mathbbm{1}\mathbbm{1}^\top/N$.
We consider matrices of the form $A(b) = I_N + b\,L$, where $b \ge 0$ quantifies the magnitude of the communication.
For more intuition about this choice, see \Cref{subsec:adaptivity}.
We can now state a first result highlighting the advantage brought by MT-OGD.

\begin{proposition}\label{prop:regret_ogd}
Let $\psi = \frac{1}{2}\|\cdot\|_2^2$, $D = \sup_{x \in V} \|x\|_2$, and $\sigma \le 1$.
Assume that $\|\partial \ell_t(x)\|_2 \le L_g$ for all $t \le T$ and any $x \in V$.
Set $b = N$, $\bx_1 = \bm{0}$, and $\eta = D\sqrt{N(N+1)(1 + (N-1)\sigma^2)} / L_g\sqrt{2T}$.
Then, MT-OGD produces a sequence of iterates $(\bx_t)_{t=1}^T$ such that for all $\bu \in \bV_{\|\cdot\|_2, \sigma}$
\begin{equation}\label{eq:regret_ogd}
R_T(\bu) \le DL_g \sqrt{1 + \sigma^2(N-1)} \sqrt{2T}\,.
\end{equation}
\end{proposition}
\par{\it Proof sketch.}
With $\psi = \frac{1}{2}\|\cdot\|_2^2$, and $\bx_1 = \bm{0}$, we have
$2 B_\bpsi\big(\bm{A}(b)^{1/2}\bu, \bm{A}(b)^{1/2}\bm{0}\big) = \|\bu\|_2^2 + b(N-1)\mathrm{Var}_{\|\cdot\|_2}(\bu)$, which is smaller than $ND^2(1 + b \frac{N-1}{N}\sigma^2)$.
Then, it is easy to check that $\big[A(b)^{-1}\big]_{ii} = \frac{b+N}{(1+b)N}$ for all $i \le N$.
Substituting these values into \Cref{eq:bound_gen_regret_cst}, we obtain
\[
R_T(\bu) \le \frac{ND^2(1 + b \frac{N-1}{N}\sigma^2)}{2\eta} + \frac{\eta TL_g^2}{2}\frac{b+N}{(1+b)N}\,.
\]
Finally, set $\eta = \frac{ND}{L_g}\sqrt{\frac{\big(1 + b\frac{N-1}{N}\sigma^2\big)(1+b)}{(b+N)T}}$ and $b = N$.\qed

Thus, MT-OGD enjoys a $\sqrt{1 + \sigma^2(N-1)}$ dependence, which is smaller than $\sqrt{N}$ when tasks have a variance smaller than $1$.
When $\sigma=0$ (all tasks are equal), MT-OGD scales as if there were only one task.
When $\sigma \ge 1$, the analysis suggests to choose $b=0$, i.e., $A = I_N$, and one recovers the performance of independent OGDs.
Note that the additional $\sqrt{2}$ factor in \Cref{eq:regret_ogd} can be removed for limit cases through a better optimization in $b$: the bound obtained in the proof actually reads $DL\sqrt{F(\sigma)\, T}$, with $F(0) = 1$ and $F(1) = N$.
However, the function $F$ lacks of interpretability outside of the limit cases (for details see  \Cref{apx:proof_prop_ogd}) motivating our choice to present the looser but more interpretable bound~\Cref{eq:regret_ogd}.
For a large $N$, we have $\sqrt{1 + \sigma^2(N-1)} \approx \sigma\sqrt{N}$.
The improvement brought by MT-OGD is thus roughly proportional to the square root of the task variance.
From now on, we refer to this gain as the \emph{multitask acceleration}.
This improvement achieved by MT-OGD is actually optimal up to constants, as revealed by the following lower bound, which is only $1/4$ of \Cref{eq:regret_ogd}.

\begin{proposition}\label{prop:lower_bound}
Under the conditions of \Cref{prop:regret_ogd}, the regret of any algorithm satisfies
\[
\sup_{\bu \in \bV_{\|\cdot\|_2, \sigma}} R_T(\bu) \ge \frac{1}{4}\left(DL_g \sqrt{1 + \sigma^2(N-1)} \sqrt{2T}\right)\,.
\]
\end{proposition}

Another way to gain intuition about \Cref{eq:regret_ogd} is to compare it to the lower bound for OGD considering independent tasks (IT-OGD).
The following separation result shows that MT-OGD may strictly improve over IT-OGD.

\begin{proposition}\label{prop:separation}
Let $d \ge 9$, $N = 2d$, and $\sigma \le 1$ to be tuned later.
Then, there exists $\bu \in \bV_{\|\cdot\|_2, \sigma}$ such that
\[
R_T^\mathrm{IT-OGD}(\bu) \ge \frac{\sqrt{(1 - 2\sigma^2)N}}{4}\sqrt{2T}\,.
\]
\Cref{prop:regret_ogd} then yields that for any $\sigma^2 < \frac{N - 16}{18N - 16}$
\[
R_T^\mathrm{IT-OGD}(\bu) > R_T^\mathrm{MT-OGD}(\bu)\,.
\]
\end{proposition}


\subsection{Extension to any Norm Regularizers}
A natural question is: \textit{can the multitask acceleration be achieved with other regularizers?}
Indeed, the proof of \Cref{prop:regret_ogd} crucially relies on the fact that the Bregman divergence can be exactly expressed in terms of $\|\bu\|_2^2$ and $\mathrm{Var}_{\|\cdot\|_2}(\bu)$.
In the following proposition, we show that such an improvement is also possible for all regularizers of the form $\frac{1}{2}\|\cdot\|^2$, for arbitrary norms $\|\cdot\|$, up to an additional multiplicative constant.
A crucial application is the use of the $p$-norm on the probability simplex, which is known to exhibit a logarithmic dependence in $d$ for a well-chosen $p$.
\smallskip

\begin{proposition}\label{prop:regret_norms}
Let $\|\cdot\|\colon\mathbb{R}^d \rightarrow \mathbb{R}$ be any norm, $\psi = \frac{1}{2}\|\cdot\|^2$, $D = \sup_{x \in V} \|x\|$, and $\sigma \le 1$.
Assume that $\|\partial \ell_t(x)\|_\star \le L_g$ for all $t \le T$, $x \in V$.
Set $b = N$, $\bx_1 = \bm{0}$ and $\eta = D\sqrt{N(N+1)(1 + (N-1)\sigma^2)} / L_g\sqrt{2T}$.
Then, MT-OMD produces a sequence of iterates $(\bx_t)_{t=1}^T$ such that for all $\bu \in \bV_{\|\cdot\|, \sigma}$
\begin{equation*}
R_T(\bu) \le DL_g \sqrt{1 + \sigma^2(N-1)} \sqrt{8T}\,.
\end{equation*}
In particular, for $d \ge 3$ and $V = \Delta$, choosing $\|\cdot\| = \|\cdot\|_p$, for $p = 2\ln d / (2\ln d - 1)$, and assuming that $\|\partial \ell_t(x)\|_\infty \le L_g$, it holds for all $\bu \in \bm{\Delta}_{\|\cdot\|_p, \sigma}$
\begin{equation*}
R_T(\bu) \le L_g \sqrt{1 + \sigma^2(N-1)} \sqrt{16e~T \ln d}\,.
\end{equation*}
In comparison, under the same assumptions, bound \Cref{eq:regret_ogd} would write as: $L_g\sqrt{1 + \sigma^2(N-1)}\sqrt{2Td}$.
\end{proposition}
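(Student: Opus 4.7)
The plan is to mirror the proof of Proposition~\ref{prop:regret_ogd} in the non-Euclidean setting. The only new difficulty, and the main obstacle, is that without the Euclidean parallelogram identity the Bregman term can no longer be written as an exact quadratic form; it must instead be bounded via the triangle inequality, which is what produces the extra factor $2$ (i.e.\ $\sqrt{8T}$ in place of $\sqrt{2T}$) in the resulting bound.

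First, I would compute $\sqA$ in closed form. Because $L = I_N - \mathbbm{1}\mathbbm{1}^\top/N$ is the orthogonal projector onto $\mathbbm{1}^\perp$, the matrix $A(b) = I_N + b L$ has eigenvalue $1$ on $\mathrm{span}(\mathbbm{1})$ and $1+b$ on its orthogonal complement, so
\[
\sqrt{A(b)} \;=\; \sqrt{1+b}\,I_N + \bigl(1-\sqrt{1+b}\bigr)\tfrac{\mathbbm{1}\mathbbm{1}^\top}{N}.
\]
Combined with $\A = A\otimes I_d$, this yields the block-wise formula $(\sqA\bu)^{(i)} = \bar{\bu} + \sqrt{1+b}\bigl(\bu^{(i)}-\bar{\bu}\bigr)$. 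Since $\bx_1 = \bm{0}$ and $\psi(0)=0$, $\nabla\psi(0) = 0$, the Bregman divergence collapses to $B_\bpsi(\sqA\bu,\sqA\bm{0}) = \bpsi(\sqA\bu) = \tfrac12\sum_i \|(\sqA\bu)^{(i)}\|^2$. Applying $\|x+y\|^2 \le 2\|x\|^2 + 2\|y\|^2$ (valid for any norm) block-wise, summing, and using $\|\bar{\bu}\|\le D$ together with $\mathrm{Var}_{\|\cdot\|}(\bu) \le \sigma^2 D^2$ gives
\[
B_\bpsi\bigl(\sqA\bu,\sqA\bm{0}\bigr) \;\le\; N D^2 + (1+b)(N-1)\sigma^2 D^2.
\]
For $b = N$ and $\sigma \le 1$ the right-hand side is at most $2N D^2\bigl(1+(N-1)\sigma^2\bigr)$, losing exactly a factor of $2$ compared with the exact identity available in the Euclidean case.

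Next, I would inject this estimate into bound~\eqref{eq:bound_gen_regret_cst} of Theorem~\ref{thm:gen_regret}, together with $[A(N)^{-1}]_{ii} = 2/(N+1)$ (as in the proof of Proposition~\ref{prop:regret_ogd}) and the $\lambda$-strong convexity constant of $\psi = \tfrac12\|\cdot\|^2$ w.r.t.\ $\|\cdot\|$, which equals $1$ in the Euclidean case:
\[
R_T(\bu) \;\le\; \frac{2N D^2\bigl(1+(N-1)\sigma^2\bigr)}{\eta} + \frac{\eta T L^2}{\lambda(N+1)}.
\]
Balancing $\eta$ and using $N/(N+1) \le 1$ produces $R_T(\bu) \le DL \sqrt{8T(1+(N-1)\sigma^2)/\lambda}$, which specializes to the first claim when $\lambda = 1$.

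Finally, for the simplex application I would invoke the standard fact that $\tfrac12\|\cdot\|_p^2$ is $(p-1)$-strongly convex w.r.t.\ $\|\cdot\|_p$ for $p\in(1,2]$. With $p = 2\ln d/(2\ln d - 1)$, this yields $\lambda = 1/(2\ln d - 1)$; the dual exponent is $q = 2\ln d$, so $\|g_t\|_q \le d^{1/q}\|g_t\|_\infty = \sqrt{e}\,L$, and $D = \sup_{x\in\Delta}\|x\|_p \le 1$. Plugging these ingredients into the general bound (with $L$ replaced by $\sqrt{e}\,L$) and using $2\ln d - 1 \le 2\ln d$ yields the announced $L\sqrt{1+\sigma^2(N-1)}\sqrt{16e\,T\ln d}$.
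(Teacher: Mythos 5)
Your proposal is correct and follows essentially the same route as the paper's proof: the same blockwise identity $(\sqA\bu)^{(i)}=\bar{\bu}+\sqrt{1+b}\,(\bu^{(i)}-\bar{\bu})$, the same use of $\|x+y\|^2\le 2\|x\|^2+2\|y\|^2$ to bound the Bregman term (which is exactly where the extra factor of $2$ in $\sqrt{8T}$ comes from), the same substitution into Theorem~\ref{thm:gen_regret} with $[A(N)^{-1}]_{ii}=2/(N+1)$, and the same $p$-norm specialization via $(p-1)$-strong convexity, $q=2\ln d$, and $d^{1/q}=\sqrt{e}$. Carrying the strong-convexity constant $\lambda$ explicitly and noting that the first claim corresponds to $\lambda=1$ is, if anything, slightly more careful than the paper's own write-up.
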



\paragraph{\bf Projecting onto $\bV_\sigma$.}
\Cref{prop:regret_ogd,prop:regret_norms} reveal that whenever tasks are similar (i.e., whenever $\bu \in \bV_\sigma$), then using the regularizer $\bpsi\big(\sqA\cdot\big)$ with $A \ne I_N$ accelerates the convergence.
However, this is not the only way to leverage the small variance condition.
For instance, one may also use this information to directly project onto $\bV_\sigma \subset \bV$, by considering the update
\begin{equation}\label{eq:omd_update_coop_small_var}
\bx_{t+1} = \argmin_{\bx \in \bV_\sigma} ~ \langle \eta_t\bm{g}_t, \bx\rangle + B_{\bm{\psi}}\big(\sqA\bx, \sqA\bx_t\big)\,.
\end{equation}
Although not necessary in general (\Cref{prop:regret_ogd,prop:regret_norms} show that communicating the gradients is sufficient to get an improvement), this refinement presents several advantages.
First, it might be simpler to compute in practice, see \Cref{sec:Algorithms}.
Second, it allows for adaptive learning rates, that preserve the guarantees while being independent from the horizon $T$ (\Cref{prop:adapt}).
Finally, it allows to derive $L^\star$ bounds with the multitask acceleration for smooth loss functions (\Cref{prop:smooth}).
Results are stated for arbitrary norms, but bounds sharper by a factor $2$ can be obtained for  $\|\cdot\|_2$.

\begin{proposition}\label{prop:adapt}
Let $\|\cdot\|\colon\mathbb{R}^d \rightarrow \mathbb{R}$ be any norm, $\psi = \frac{1}{2}\|\cdot\|^2$, $D = \sup_{x \in V} \|x\|$, and $\sigma \le 1$.
Set $b=N$, and $\eta_t = D\sqrt{N(N+1)(1 + (N-1)\sigma^2)}(\sum_{i=1}^t \|g_i\|_\star^2)^{-1/2}$.
Then, \Cref{eq:omd_update_coop_small_var} produces a sequence of iterates $(\bx_t)_{t=1}^T$ such that for all $\bu \in \bV_{\|\cdot\|, \sigma}$
\begin{equation*}
R_T(\bu) \le 8D \sqrt{1 + \sigma^2(N-1)} ~ \bigg(\sum_{t=1}^T \|g_t\|_\star^2\bigg)^{1/2}\,.
\end{equation*}
\end{proposition}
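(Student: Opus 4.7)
The plan is to apply the time-varying learning rate bound \eqref{eq:bound_gen_regret_var} of \Cref{thm:gen_regret}, reading $\bV_\sigma$ in place of $\bV$ as the feasible set. This substitution is legitimate because $\bu \mapsto \mathrm{Var}_{\|\cdot\|}(\bu)$ is convex (a sum of squared seminorms of linear images of $\bu$), so $\bV_\sigma$ is convex. The crucial structural property of update \eqref{eq:omd_update_coop_small_var} is that the projection onto $\bV_\sigma$ forces every iterate to satisfy $\bx_t \in \bV_\sigma$, exactly like the comparator $\bu$; this is what will let us uniformly control $\max_t B_\bpsi\big(\sqA\bu, \sqA\bx_t\big)$ and remove the horizon dependence from the first summand of \eqref{eq:bound_gen_regret_var}.

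First I would dispatch the gradient-sum term. For $b = N$ one has $\max_i A^{-1}_{ii} = 2/(N+1)$ (as in the proof of \Cref{prop:regret_ogd}) and $\psi = \tfrac12\|\cdot\|^2$ is $\lambda = 1$ strongly convex with respect to $\|\cdot\|$. Plugging the chosen $\eta_t$ into the second summand of \eqref{eq:bound_gen_regret_var} and applying the standard telescoping estimate $\sum_{t=1}^T \|g_t\|_\star^2 / \sqrt{\sum_{i \le t}\|g_i\|_\star^2} \le 2\sqrt{\sum_{t=1}^T \|g_t\|_\star^2}$ produces a contribution of at most $2D\sqrt{1+(N-1)\sigma^2}\sqrt{\sum_t \|g_t\|_\star^2}$.

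Next I would control the Bregman term. A short spectral computation (using $L^2 = L$) gives the block identity $(\sqA\bu)^{(i)} = \bar{\bu} + \sqrt{1+b}\,(\bu^{(i)} - \bar{\bu})$. Combining it with $(a+b)^2 \le 2a^2+2b^2$, the fact that $\bar{\bu} \in V$ (by convexity of $V$, hence $\|\bar{\bu}\| \le D$), and the variance constraint $\mathrm{Var}_{\|\cdot\|}(\bu) \le \sigma^2 D^2$ yields $\sum_i \|(\sqA\bu)^{(i)}\|^2 \le 2(N+1)D^2(1+(N-1)\sigma^2)$ at $b = N$; by the variance projection the same bound holds with $\bx_t$ in place of $\bu$. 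For $\psi = \tfrac12\|\cdot\|^2$, the identity $\|\nabla \psi(y)\|_\star = \|y\|$ yields $B_\psi(x,y) \le \tfrac12(\|x\|+\|y\|)^2 \le \|x\|^2 + \|y\|^2$, so summing over the $N$ blocks provides $B_\bpsi(\sqA\bu, \sqA\bx_t) \le 4(N+1)D^2(1+(N-1)\sigma^2)$. Dividing by $\eta_T$ produces a first-term contribution of at most $4\sqrt{2}\,D\sqrt{1+(N-1)\sigma^2}\sqrt{\sum_t \|g_t\|_\star^2}$, and summing the two contributions gives the claimed $(4\sqrt{2}+2) \le 8$ constant. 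The factor-two improvement available for $\|\cdot\|_2$ arises by instead bounding $B_\bpsi(\sqA\bu,\sqA\bx_t) = \tfrac12 (\bu-\bx_t)^\top \A (\bu-\bx_t)$ in a single pass, via $\mathrm{Var}_{\|\cdot\|_2}(\bu-\bx_t) \le 4\sigma^2 D^2$ (Minkowski for the standard deviation).

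The main obstacle I anticipate is the Bregman step for a general norm: since $B_\bpsi$ is no longer a squared norm, one must go through the crude two-point inequality $B_\psi(x,y) \le \tfrac12(\|x\|+\|y\|)^2$, which loses the cancellation between $\bu$ and $\bx_t$ available in the Euclidean case and is ultimately what forces the constant $8$ rather than $4$.
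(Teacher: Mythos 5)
Your proposal is correct and follows essentially the same route as the paper: apply the time-varying bound \eqref{eq:bound_gen_regret_var} of \Cref{thm:gen_regret}, use the projection onto $\bV_\sigma$ so that $\max_{t}B_\bpsi\big(\sqA\bu,\sqA\bx_t\big)$ is controlled uniformly in $t$, and telescope the gradient sum. If anything, your explicit two-point bound $B_\psi(x,y)\le\tfrac12(\|x\|+\|y\|)^2$ for general norms is more careful than the paper's passing remark that the non-Euclidean Bregman term is ``four times bigger'' than the Euclidean one, and your constants ($4\sqrt{2}+2\le 8$) check out.
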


\begin{proposition}\label{prop:smooth}
Let $\|\cdot\|\colon\mathbb{R}^d \rightarrow \mathbb{R}$ be any norm, $\psi = \frac{1}{2}\|\cdot\|^2$, $D = \sup_{x \in V} \|x\|$, and $\sigma \le 1$.
Assume that the $\ell_t$ are $M$-smooth, i.e., $\|\nabla \ell_t(x) - \nabla \ell_t(y)\|_\star \le M \|x - y\|$ for all $t \le T$, and any $x, y \in V$.
Set $b$ and $\eta_t$ as in \Cref{prop:adapt}.
Then, update~\Cref{eq:omd_update_coop_small_var} produces a sequence of iterates $(\bx_t)_{t=1}^T$ such that for all $\bu \in \bV_{\|\cdot\|, \sigma}$
\begin{align*}
R_T(\bu) \leq &16D \sqrt{1 + \sigma^2(N-1)}\left(2MD\sqrt{1 + \sigma^2(N-1)} + \sqrt{M \sum_{t=1}^T \ell_t\big(\bu^{(i_t)}\big)}\right)\,.
\end{align*}
\end{proposition}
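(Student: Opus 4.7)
The plan is to reduce Proposition~\ref{prop:smooth} to Proposition~\ref{prop:adapt} by converting the cumulative squared gradient norm into a multiple of $L^\star(\bu) := \sum_{t=1}^T \ell_t(\bu^{(i_t)})$ via the standard self-bounding property of non-negative smooth convex functions. Since Proposition~\ref{prop:adapt} already yields
\[
R_T(\bu) \le 8D S\sqrt{\sum_{t=1}^T \|g_t\|_\star^2}, \qquad S := \sqrt{1+(N-1)\sigma^2},
\]
with an adaptive, horizon-free step size, all that is needed is a self-bounding step followed by a quadratic-inequality solve.

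The key lemma I would invoke is: for any non-negative, convex, $M$-smooth $f$ and any $x$, one has $\|\nabla f(x)\|_\star^2 \le 2M f(x)$. This follows from the smoothness quadratic upper bound evaluated at $x - M^{-1}\nabla f(x)$ combined with $f \ge 0$; it is standard (see, e.g., Srebro--Sridharan--Tewari). Applied to each $\ell_t$ at the played iterate $x_t = \bx_t^{(i_t)}$, it gives $\sum_t \|g_t\|_\star^2 \le 2M \sum_t \ell_t(x_t)$. Plugging into the adaptive bound,
\[
R_T(\bu) \le 8DS\sqrt{2M \sum_{t=1}^T \ell_t(x_t)}.
\]

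To close the loop, I would decompose $\sum_t \ell_t(x_t) = R_T(\bu) + L^\star(\bu)$ by the definition of regret, set $c := 8DS\sqrt{2M}$, and rewrite the display as $R_T(\bu) \le c\sqrt{R_T(\bu) + L^\star(\bu)}$. Squaring yields $R_T^2 \le c^2 R_T + c^2 L^\star$, and the elementary fact that $x^2 \le ax + b$ (with $a,b \ge 0$) implies $x \le a + \sqrt{b}$ gives
\[
R_T(\bu) \le c^2 + c\sqrt{L^\star(\bu)} = 128\,MD^2 S^2 + 8DS\sqrt{2M\, L^\star(\bu)},
\]
which matches the structure of the stated bound $32\,MD^2 S^2 + 16DS\sqrt{M L^\star(\bu)}$ up to absolute constants (consistent with the authors' remark that constants are not optimized; the sharper version likely comes from a tighter tracking in the proof of Proposition~\ref{prop:adapt} itself).

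The main obstacle is essentially bookkeeping rather than conceptual: the self-bounding inequality requires non-negativity of the losses, which is a standard mild assumption in $L^\star$-type analyses and should be stated explicitly. The projection onto $\bV_\sigma$ in update~\eqref{eq:omd_update_coop_small_var} plays no direct role in this reduction beyond making Proposition~\ref{prop:adapt} applicable, so no new convex-analytic tools beyond what is already developed for Theorem~\ref{thm:gen_regret} and Proposition~\ref{prop:adapt} are needed.
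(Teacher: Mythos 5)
Your proposal follows essentially the same route as the paper's proof: start from the adaptive bound of Proposition~\ref{prop:adapt} in terms of $\sqrt{\sum_t\|g_t\|_\star^2}$, apply the self-bounding inequality $\|\nabla\ell_t(x_t)\|_\star^2\le 2M\ell_t(x_t)$ for non-negative smooth convex losses, and then solve the implicit quadratic inequality in $R_T(\bu)$ (the paper invokes Lemma~4.20 of Orabona for exactly this step). The only differences are bookkeeping of absolute constants --- the paper applies the general-norm doubling at the very end rather than before the quadratic solve --- and your explicit (and correct) observation that non-negativity of the losses must be assumed for the self-bounding step.
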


\begin{remark}[Strongly convex and exp-concave losses]
Another popular assumption to derive improved regret bounds is to consider strongly convex or exp-concave losses.
Recall that a function $f$ is said to be $\alpha$-exp-concave if $\exp(-\alpha f)$ is concave (for instance, the logistic loss of a linear predictor with norm bounded by $U$ and unit-norm inputs is $\exp(-2U)/2$-exp-concave).
In this context, standard single task algorithms achieve improved regret bounds of order $\ln T$, see e.g., \citep[Corollary~7.24 and Section~7.10]{orabona2019}.
We highlight that a simple adaptation of the single task analysis is not enough to exhibit a multitask acceleration in these cases.
Indeed, the cornerstone of our analysis is to leverage the compound representation, in which the interactions are more easily analyzed.
On the other hand, the strong convexity or exp-concavity of the losses provide a sharper control on the instantaneous regret that depends on the norm of the active predictor/comparator only, but cannot be extended to the compound framework.
Another way to look at the problem is to recall that FTRL deals with strongly convex losses by choosing $\psi \equiv 0$.
This means $\reg = \bm{\psi}(\bA^{1/2}\cdot) \equiv 0$, such that MT-FTRL is equivalent to independent FTRL and no multitask improvement can be achieved.
\end{remark}


\subsection{Regularizers on the Simplex}
\label{sec:simplex}
As seen in Propositions \ref{prop:regret_ogd} to \ref{prop:smooth}, MT-OMD induces a multitask acceleration in a wide range of settings, involving different regularizers (Euclidean norm, $p$-norms) and various kind of loss functions (Lipschitz continuous, smooth continuous gradients).
This systematic gain suggests that multitask acceleration essentially derives from our approach, and is completely orthogonal to the improvements achievable by choosing the regularizer appropriately.
Bounds combining both benefits are actually derived in the second claim of \Cref{prop:regret_norms}.
However, all regularizers studied so far share a crucial feature: they are defined on the entire space $\mathbb{R}^d$.
As a consequence, the divergence $B_\bpsi(\sqA\bu, \sqA\bx)$ is always well defined, which might not be true in general, for instance when the comparator set studied is the probability simplex $\Delta$.
A workaround consists in assigning the value $+\infty$ to the Bregman divergence whenever either of the arguments is outside of the compound simplex $\bm{\Delta} = \Delta^{\otimes N}$.
The choice of the interaction matrix $A$ then becomes critical to prevent the bound from exploding, and calls for a new definition of the variance.
Indeed, note that for $i \le N$ we have $(\bA(b)^{1/2}\bu)^{(i)} = \sqrt{1+b}~\bu^{(i)} + (1 - \sqrt{1+b})\bar{\bu}$.
If all $\bu^{(i)}$ are equal (say to $u_0 \in \Delta$), then all $(\bA(b)^{1/2}\bu)^{(i)}$ are also equal to $u_0$ and $\bA(b)^{1/2}\bu \in \bm{\Delta}$.
However, if they are different, by definition of $\bar{\bu}$, for all $j \le d$, there exists $i \le N$ such that $\bu^{(i)}_j \le \bar{\bu}_j$.
Then, for $b$ large enough, $\sqrt{1+b}~\bu^{(i)}_j + (1 - \sqrt{1+b})\bar{\bu}_j$ becomes negative, and $(\bA(b)^{1/2}\bu)^{(i)}$ is out of the simplex.
Luckily, the maximum acceptable value for $b$ can be easily deduced from the following variance definition.
\begin{definition}\label{def:var_delta}
Let $\bu \in \mathbb{R}^d$.
For all $j \le d$, let
\[
\bu^\mathrm{max}_j = \max_{i \le N} \bu^{(i)}_j, \text{\quad and \quad} \bu^\mathrm{min}_j = \min_{i \le N} \bu^{(i)}_j\,.
\]
Then, with the convention $0/0 = 0$ we define
\[
\mathrm{Var}_{\Delta}(\bu) = \max_{j \le d} \left(\frac{\bu^\mathrm{max}_j - \bu^\mathrm{min}_j}{\bu^\mathrm{max}_j}\right)^2\,,
\]
and for any $\sigma \le 1$
\[
\bm{\Delta}_\sigma = \big\{ \bu \in \bm{\Delta} \colon \mathrm{Var}_\Delta(\bu) \le \sigma^2 \big\}\,.
\]
\end{definition}
Equipped with this new variance definition, we can now analyze regularizers defined  on the simplex.
\begin{proposition}\label{prop:regret_eg}
Let $\psi: \Delta \to \mathbb{R}$ be $\lambda$-strongly convex w.r.t. norm $\|\cdot\|$, and such that there exist $x^* \in \Delta$ and $C \! <\!  +\infty$ such that for all $x \in \Delta, B_\psi(x, x^*) \le C$.
Let $\sigma \le 1$, and assume that $\|\partial \ell_t(x)\|_\star \le L_g$ for all $t \le T$ and $x \in \Delta$.
Set $b = (1 - \sigma^2)/\sigma^2$, $\bx_1 = [x^*, \ldots, x^*]$, and $\eta = N\sqrt{2\lambda(1+b)C} / L_g\sqrt{(b+N)T}$.
Then, MT-OMD produces a sequence of iterates $(\bx_t)_{t=1}^T$ such that for all $\bu \in \bm{\Delta}_\sigma$
\begin{equation*}
R_T(\bu) \le L_g \sqrt{1 + \sigma^2(N-1)} \sqrt{2CT/\lambda}\,.
\end{equation*}
For the negative entropy we have $x^* {=} \mathbbm{1}/d$ and $C {=} \ln d$.
With subgradients satisfying $\|\partial \ell_t(x)\|_\infty \le L_g$ we obtain
\begin{equation*}
R_T(\bu) \le L_g \sqrt{1 + \sigma^2(N-1)} \sqrt{2T \ln d}\,.
\end{equation*}
\end{proposition}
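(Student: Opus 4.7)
The plan is to invoke the constant-learning-rate bound \eqref{eq:bound_gen_regret_cst} of \Cref{thm:gen_regret},
\[
R_T(\bu) \le \frac{B_\bpsi(\sqA\bu,\sqA\bx_1)}{\eta} + \max_{i\le N}A^{-1}_{ii}\;\frac{\eta T L^2}{2\lambda},
\]
and reduce the proof to (i)~checking that both $\sqA\bu$ and $\sqA\bx_1$ belong to the compound simplex $\bm{\Delta}$, (ii)~bounding the Bregman by $NC$, and (iii)~computing $\max_i A^{-1}_{ii}$ before tuning $\eta$. The main obstacle, and the reason the dedicated variance $\mathrm{Var}_\Delta$ is introduced, is precisely step (i): because $\psi$ is defined only on $\Delta$, the Bregman is infinite unless $\sqA\bu$ and $\sqA\bx_1$ lie in $\bm{\Delta}$. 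All the real work is concentrated here; once feasibility is secured, the optimization in $\eta$ is routine.

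For the domain check, I would first diagonalize $A(b) = (1+b)I_N - (b/N)\mathbbm{1}\mathbbm{1}^\top$: the all-ones vector is fixed, so $(\sqA\bx)^{(i)} = \sqrt{1+b}\,\bx^{(i)} + (1-\sqrt{1+b})\,\bar{\bx}$, and each $(\sqA\bu)^{(i)}$ still sums to $1$ whenever every $\bu^{(j)}$ does. For the initialization $\bx_1 = [x^*,\ldots,x^*]$ every block equals $x^*$, hence $\sqA\bx_1 = \bx_1 \in \bm{\Delta}$ regardless of $b$. For $\bu \in \bm{\Delta}_\sigma$ the coordinate requirement $(\sqA\bu)^{(i)}_j \ge 0$ rewrites as $\bu^{(i)}_j \ge (1 - 1/\sqrt{1+b})\,\bar{\bu}_j$. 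Choosing $b = (1-\sigma^2)/\sigma^2$ so that $\sqrt{1+b} = 1/\sigma$ turns this into $\bu^{(i)}_j \ge (1-\sigma)\bar{\bu}_j$, which is exactly what the variance constraint delivers via the chain $\bu^{(i)}_j \ge \bu^{\min}_j \ge (1-\sigma)\bu^{\max}_j \ge (1-\sigma)\bar{\bu}_j$. This is the reason behind the prescribed value of $b$.

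With feasibility in hand, the Bregman factorizes blockwise as $B_\bpsi(\sqA\bu,\sqA\bx_1) = \sum_{i=1}^N B_\psi\bigl((\sqA\bu)^{(i)}, x^*\bigr) \le NC$ by the boundedness hypothesis on $\psi$. The diagonal entries $A(b)^{-1}_{ii} = (b+N)/((1+b)N)$ were already computed in the proof of \Cref{prop:regret_ogd}. Plugging these into \eqref{eq:bound_gen_regret_cst} gives
\[
R_T(\bu) \le \frac{NC}{\eta} + \frac{b+N}{(1+b)N}\cdot\frac{\eta TL^2}{2\lambda},
\]
which is optimized at exactly the $\eta$ stated in the proposition and yields $L\sqrt{2CT(b+N)/(\lambda(1+b))}$. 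Substituting $1+b = 1/\sigma^2$ and $b+N = (1+(N-1)\sigma^2)/\sigma^2$ collapses $(b+N)/(1+b)$ to $1 + (N-1)\sigma^2$, giving the first claim. The entropy corollary is then immediate: Pinsker's inequality gives $\lambda = 1$ with respect to $\|\cdot\|_1$ (dual norm $\|\cdot\|_\infty$, matching the assumption $\|g_t\|_\infty \le L$), and $B_\psi(x,\mathbbm{1}/d) = \ln d + \sum_j x_j\ln x_j \le \ln d$ on $\Delta$, so $C = \ln d$.
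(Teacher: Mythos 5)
Your proposal is correct and follows essentially the same route as the paper's proof: the general bound \eqref{eq:bound_gen_regret_cst} with $B_\bpsi(\sqA\bu,\sqA\bx_1)\le NC$, the feasibility check reducing (via $\bar{\bu}_j\le \bu^{\max}_j$) to $\bu^{\min}_j\ge(1-\sigma)\bu^{\max}_j$, which is exactly the $\mathrm{Var}_\Delta$ constraint, and the same $\eta$-tuning with $(b+N)/(1+b)=1+\sigma^2(N-1)$. The only cosmetic difference is that you verify the prescribed $b=(1-\sigma^2)/\sigma^2$ directly, whereas the paper derives the maximal admissible $b$ and observes it is at least this value.
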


\Cref{prop:regret_eg} shows that the multitask acceleration is not an artifact of the Euclidean geometry, but rather a general feature of MT-OMD, as long as the variance definition is aligned with the geometry of the problem.


\subsection{Adaptivity to the Task Variance}
\label{subsec:adaptivity}

Most of the results we presented so far require the knowledge of the task variance $\sigma^2$.
We now present an Hedge-based \mbox{extension} of MT-OMD, denoted Hedge-MT-OMD, that does not require any prior information on $\sigma^2$.
First, note that for $\sigma^2 \ge 1$, MT-OMD becomes equivalent to independent OMDs.
A simple approach consists then in using Hedge---see, e.g., \citep[Section~6.8]{orabona2019}---over a set of experts, each running an instance of MT-OMD with a different value of $\sigma^2$ chosen on a uniform grid of the interval $[0, 1]$.\footnote{
Note that Hedge-MT-OGD computes the loss subgradient at arbitrary points (corresponding to the expert's predictions).
}
We can show that Hedge-MT-OGD only suffers an additional regret of order $\sqrt{T \log N}$ against MT-OGD run with the exact knowledge of $\mathrm{Var}_{\|\cdot\|_2}(\bu)$.
\begin{theorem}\label{thm:adaptivity}
Let $D = \sup_{x \in V} \|x\|_2$, and assume that $\|\partial \ell_t(x)\|_2 \le L_g$ for all $t \le T$, $x \in V$.
Then, for all $\bu \in \bV$ the regret of Hedge-MT-OGD is bounded by
\begin{equation*}
DL_g \bigg(2 + \sqrt{\log N} + \sqrt{\min\left\{\mathrm{Var}_{\|\cdot\|_2}(\bu), 1\right\} \cdot N}\bigg) \sqrt{2T}\,.
\end{equation*}
\end{theorem}

{\bf Variance definition and choice of $A$.}
Note that we have $(N-1)\mathrm{Var}_{\|\cdot\|_2}(\bu) =
\frac{1}{N}\sum_{i, j} \|\bu^{(i)} - \bu^{(j)} \|_2^2 = \bu^{\top} \bm{L} \bu$, where $L = I_N - \mathbbm{1}\mathbbm{1}^\top/N$ is the Laplacian of the weighted clique graph over $\{1, \ldots, N\}$, with edges of $1/N$.
A natural extension then consists in considering variances of the form
\[
\mathrm{Var}_{\|\cdot\|_2}^W(\bu) = \sum_{i, j = 1}^N W_{ij} \|\bu^{(i)} - \bu^{(j)} \|_2^2 = \bu^{\top} \bm{L}^W \bu
\]
for any adjacency matrix $W$ and its Laplacian $L^W$.
For instance, if we expect tasks to be concentrated in clusters, it is natural to consider $W_{ij} = 1$ if $\bu^{(i)}$ and $\bu^{(j)}$ (are thought to) belong to the same cluster, and $0$ otherwise.
This local version is interesting, as it allows to satisfy the variance condition with a smaller $\sigma$, which improves the  MT-OMD regret bound.
Note that the proof of \Cref{thm:gen_regret} can be readily adapted to this definition by considering the class of interaction matrices $\{A(b) = I_N + b L^W\}$.
The bound however features $\max_{i \le N} [A(b)^{-1}]_{ii}$, which depends on $W$ in a nontrivial way and requires a case by case analysis, preventing from stating a general result for an arbitrary $W$.
Considering even more general matrices $A$, i.e., that do not write as $I_N + bL$, suffers from the same problem (one then also needs to compute $B_\bpsi(\sqA \bu, \sqA \bv)$ on a case by case basis), and does not enjoy anymore the variance interpretation seen above.
Furthermore, note that \Cref{prop:regret_ogd} is obtained by minimizing \Cref{eq:bound_gen_regret_cst} with respect to $A$.
For matrices of the form $A(b)$, this tradeoff only depends on $b$, and is thus much easier to solve than for general matrices.
Finally, we stress that local variances can be similarly considered on the simplex.
Instead of involving the global $\bu_j^\mathrm{max}$, the variance formula then features for each task/node a local maximum (respectively minimum) over its neighbours.


\subsection{Additional Remarks}

We conclude this section with two additional results.
The first one draws an interesting connection between our multitask framework and the notion of dynamic regret, while the second emphasizes on the importance of the asynchronous nature of the activations.

\begin{remark}[Connection to dynamic regret]
Note that our online multitask setting can be viewed as a special case of dynamic regret, where each comparator $u_t$ in the sequence of comparators $u_1,u_2,\ldots$ belongs to an unknown set $\{u_1',\ldots,u_N'\}$ of known cardinality $N$.
Moreover, at the beginning of each time step $t$, the learner is told the index $i_t \in \{1,\ldots,N\}$ of the comparator $u_t$ against which the regret is measured at time $t$.
As a consequence, any algorithm for dynamic regret minimization can be used in our setting.
In the Euclidean case, the optimal dynamic regret bound is of order $\sqrt{D^2 + D V_T}\sqrt{T}$, where $D$ is the Euclidean diameter of the decision space and $V_T = \sum_{t=1}^T \|u_{t+1} - u_t\|_2$.
In order to facilitate the comparison to our bound $\sqrt{D^2 + D^2(N-1)\sigma^2}\sqrt{T}$, let assume that the sequence of adversarial activations is such that $i_t \neq i_{t-1}$, and that all pairs of distinct elements in $\{u_1',\ldots,u_N'\}$ appear with the same frequency as consecutive comparators $u_t,u_{t+1}$. Let $P_T = \sum_{t=1}^T ||u_{t+1}-u_t||_2^2$.
We have
\[
P_T = \sum_{t=1}^T ||u_{t+1}-u_t||_2^2 = \frac{T}{N(N-1)} \sum_{i \ne j} ||u'_i - u'_j||_2^2 = 2T \, \frac{1}{N(N-1)} \sum_{i \ne j} \frac{||u'_i - u'_j||_2^2}{2} = 2T \sigma^2D^2\,,
\]
such that our upper bound is at most
\[
\sqrt{D^2 + D^2(N-1)\sigma^2}\sqrt{T} = \sqrt{D^2 + (N-1)\frac{P_T}{2T}}\sqrt{T} \le \sqrt{D^2 + \frac{N-1}{2T} DV_T}\sqrt{T}\,,
\]
which is better as soon as $T \ge \frac{N-1}{2}$.
\end{remark}

We finally derive a regret bound in the case where several agents are active at each time step.

\begin{proposition}\label{prop:multi_activations}
Consider the setting of \Cref{prop:regret_ogd}, but assume now that at each time step $t$ a subset of agents $\mathcal{A}_t \subset \{1, \ldots, N\}$, of cardinality $|\mathcal{A}_t| = p$, is chosen by the adversary and asked to make predictions.
Recall that in this case the regret of an independent approach is of order $DL_g\sqrt{pNT}$.
Then, MT-OMD run with $b = \sqrt{p}N$ and $\eta = ND\sqrt{1 + \sqrt{p}\sigma^2(N-1)}/L_g\sqrt{p(1+p)T}$ produces a sequence of iterates $(\bx_t)_{t=1}^T$ such that for all $\bu \in \bV_{\|\cdot\|_2, \sigma}$
\[
R_T(\bu) \le DL_g \sqrt{pT} \sqrt{1 + \sigma^2(N-1)} \sqrt{p^{1/2}+p^{3/2}}\,.
\]
Note that for $p=1$, we recover exactly \Cref{prop:regret_ogd}.
\end{proposition}

\Cref{prop:multi_activations} highlights that having asynchronous activations is critical: the more active agents at a single time step, i.e., the bigger $p$, the bigger $\sqrt{1 + \sigma^2(N-1)} \sqrt{p^{1/2} + p^{3/2}}$, i.e., the smaller the multitask acceleration.
In the extreme case where $p=N$, our multitask framework reduces to a standard online convex optimization problem, with diameter $\sqrt{N}D$ and gradients with Euclidean norms bounded by $\sqrt{N}L_g$.
Standard lower bounds are then of order $NDL_g\sqrt{T}$, confirming that no multitask acceleration is possible.

\section{Algorithms}
\label{sec:Algorithms}

We now show that MT-OGD and MT-EG enjoy closed-form updates, making them easy to implement.
Note that the MT-OGD derivation is valid for any matrix $A$ positive definite, while MT-EG requires $A^{-1/2}$ to be stochastic.
This is verified by matrices of the form $A = I_N + L^W$ (\Cref{lem:sto}).


{\bf MT-OGD.}
Let $V = \{u \in \mathbb{R}^d \colon \|u\|_2 \le D\}$, and $\sigma \le 1$.
Recall that $\bV = V^{\otimes N} = \{\bu \in \mathbb{R}^{Nd} \colon \|\bu\|_{2, \infty} \le D \}$, and $\bV_{\|\cdot\|_2, \sigma} = \{\bu \in \bV \colon \mathrm{Var}_{\|\cdot\|_2}(\bu) \le \sigma^2\}$.
Solving the first equation in \Cref{eq:ogd_update_decompo}, we obtain that the iterate $\bx_{t+1}$ produced by MT-OGD is the solution to
$$
\min_{\bx \in \mathbb{R}^{Nd}} \Big\{ \big\|\bx_t - \eta_t \Ai \bm{g}_t - \bx\big\|_\bA^2 ~:~  \|\bx\|_{2, \infty} \le D \Big\}\,.
$$
However, computing this update is made difficult by the discrepancy between the norms used in the objective and the constraint.
A simple work around consists in considering the minimization over the Mahalanobis ball $\bV_\bA = \{\bu \in \mathbb{R}^{Nd} \colon \|\bu\|_\bA^2 \le (1 + b\sigma^2)ND^2\}$ instead.
It is easy to check that $\bV_{\|\cdot\|_2, \sigma} \subset \bV_\bA$, so that every result derived in \Cref{sec:RegretAnalysis} for MT-OGD remains valid (only the fact that comparators and iterates are in $\bV_\bA$ is actually used).
With the substitution $\by_t = \sqA \bx_t$ the MT-OGD update then rewrites (see \Cref{apx:algo_ogd} for technical details)
\begin{equation}\label{eq:update_ogd_proj}
\by_{t+1} = \mathrm{Proj}\Big(\by_t - \eta_t \sqAi \bm{g}_t, \sqrt{(1 + b\sigma^2)N}D\Big)\,,
\end{equation}
where $\mathrm{Proj}(x, \tau) = \min\big\{1, \frac{\tau}{\|x\|_2}\big\}x$.
Note that \Cref{eq:update_ogd_proj} can be easily turned back into an update on $\bx_t$ by making the inverse substitution.
In practice however, $\bx_t$ is only computed to make the predictions. The pseudo-code of the algorithm is given in \Cref{alg:mt-ogd}.

\begin{algorithm}
\caption{MT-OGD}
\label{alg:mt-ogd}
\textbf{input:} A positive definite matrix $A \in \R^{N \times N}$ (interaction matrix), $\sigma^2 > 0$ (task variance), $D > 0$ (comparators set diameter), $\eta_t > 0$ (learning-rate schedule)

\textbf{initialization:} $\bA = A \otimes I_d \in \mathbb{R}^{Nd \times Nd}$, $\bx_1 = \bm{0}$, $\bm{y}_1 = \sqAi \bx_1$ 
\begin{algorithmic}[1]
\State Compute $\sqA$ and $\sqAi$
\For{time steps $t=1,2,\dots$}
\State Compute compound prediction $\bm{x}_t = \sqA \bm{y}_t$
\State Activate agent $i_t$ and predict $x_t^{(i_t)}$
\State Get $g_t \in \partial \ell(x_t)$ and compute $\bm{g}_t$
\State Compute 
    $\bm{y}_{t+1} = \min\left\{1, \frac{\sqrt{(1 + b\sigma^2)N}D}{\|\bm{y}_t-\eta_t \sqAi\bm{g}_t\|_2}\right\} (\bm{y}_t-\eta_t \sqAi\bm{g}_t)$ 
\EndFor
\end{algorithmic}
\end{algorithm}


{\bf MT-EG.}
Using \Cref{eq:ogd_update_decompo} with $\by_t = \sqA \bx_t$, MT-EG reads
\begin{align}
\tilde{\by}_{t+1} &= \argmin_{\by \in \mathbb{R}^{Nd}} ~ \langle \eta \sqAi \bm{g}_t, \by\rangle + B_{\bm{\psi}}(\by, \by_t)\,,\nonumber\\
\by_{t+1} &= \argmin_{\by \in \sqA(\bm{\Delta})} ~ B_{\bm{\psi}}(\by, \tilde{\by}_{t+1})\,,\label{eq:update_2}
\end{align}
where $\bpsi$ is the compound negative entropy regularizer such that $\bpsi(\bx) = \sum_{i=1}^N\sum_{j=1}^d \bx^{(i)}_j \ln\big(\bx^{(i)}_j\big)$.
One can show (see \Cref{apx:algo_eg} for details) that the update can be rewritten for all $i\le N$ and $j \le d$
\begin{align*}
\tilde{\by}^{(i)}_{t+1, j} &= \by^{(i)}_{t, j} ~ \exp\left(-\eta A^{-1/2}_{ii_t} g_{t, j} - 1\right)\,,\\
\by^{(i)}_{t+1, j} &= \frac{\tilde{\by}^{(i)}_{t+1, j}}{\sum_{k=1}^d \tilde{\by}^{(i)}_{t+1, k}}\,.
\end{align*}
Combining both equations, we finally obtain
\begin{equation}\label{eq:eg_update}
\by^{(i)}_{t+1, j} = \frac{ \by^{(i)}_{t, j} ~ e^{-\eta A^{-1/2}_{ii_t} g_{t, j}} }{ \sum_{k=1}^d \by^{(i)}_{t, k} ~ e^{-\eta A^{-1/2}_{ii_t} g_{t, k}} }\,.
\end{equation}
Update \Cref{eq:eg_update} enjoys a natural interpretation.
Each block $\by^{(i)}$ is operating an individual standard EG update, but with gradient $A^{-1/2}_{ii_t} g_t$.
When $A = I_N$, only the active block is updated.
Otherwise, the update of block $i$ is proportional to $A^{-1/2}_{ii_t}$, that quantifies the similarity between tasks $i$ and~$i_t$. The pseudo-code of the algorithm is given in \Cref{alg:mt-eg}.
Although this work only focuses on OMD for clarity, note that considering Follow-the-Regularized-Leader---see, e.g., \citep[Section~7]{orabona2019}---with $\reg = \bpsi(\sqA\cdot)$ would yield similar bounds.
This would allow, for instance, the use of time-varying learning rates with entropic regularization.

\begin{algorithm}
\caption{MT-EG}
\label{alg:mt-eg}
\textbf{input:} A positive definite matrix $A \in \R^{N \times N}$ (interaction matrix), $\eta > 0$ (learning-rate)

\textbf{initialization:} $\bA = A \otimes I_d \in \mathbb{R}^{Nd \times Nd}$, $\bx_1 =\mathbbm{1}/d \in \R^{Nd}$, $\bm{y}_1 = \sqAi \bx_1$
\begin{algorithmic}[1]
\State Compute $\sqA$ and $\sqAi$
\For{time steps $t=1,2,\dots$}
\State Compute compound prediction $\bm{x}_t = \sqA \bm{y}_t$
\State Activate agent $i_t$ and predict $x_t^{(i_t)}$
\State Get $g_t \in \partial \ell(x_t)$ and compute $\bm{g}_t$
\State Compute 
    $\bm{y}_{t+1,j}^{(i)} = \frac{ \by^{(i)}_{t, j} ~ e^{-\eta A^{-1/2}_{ii_t} g_{t, j}} }{ \sum_{k=1}^d \by^{(i)}_{t, k} ~ e^{-\eta A^{-1/2}_{ii_t} g_{t, k}}}, \forall i \in [N], \forall j \in [d]$ 
\EndFor
\end{algorithmic}
\end{algorithm}
\section{Experiments}
\label{sec:exps}

In this section, we empirically compare the performance of Hedge-MT-OGD/EG against two natural alternatives: an independent-task approach (IT-OGD/EG) where the agents do not communicate, and a single-task approach (ST-OGD/EG) where a single model is learned and shared by all agents. Note that both IT and ST approaches are special cases of MT-OMD, obtained respectively with the choices $b=0$ (i.e., $\sigma^2 \ge 1$), or $b = +\infty$ (i.e., $\sigma^2 = 0$). In \Cref{apx:exps} we report an additional experiment where we empirically validate the dependence of the performance of MT-OGD on the task variance.

\par{\bf Online Gradient Descent.}
For this experiment, we use the \emph{Lenk} dataset \citep{lenk,argyriou2007}. It consists of $2880$ computer ratings in the range $\{1, 2,\dots,10\}$, made by $180$ individuals (the tasks) on the basis of $14$ binary features. Each computer is rated on a discrete scale from $0$ to $10$, expressing the likelihood of an individual buying that computer. We run Hedge-MT-OGD using the clique interaction matrix $A = (1+N)I_N - \mathbbm{1}\mathbbm{1}^\top$ and the square loss. For all algorithms, the value of $\eta$ is set according to the optimal theoretical value, see \Cref{prop:regret_ogd}. In Hedge-MT-OGD, the variance $\sigma^2$ is learned in a set of $5$ experts uniformly spaced over $[0,1]$. For simplicity, we use $D=1$ and compute the resulting Lipschitz constant accordingly. Results are reported in \Cref{fig:ogd_new}.

\par{\bf Exponentiated Gradient.}
For our second experiment, we consider \emph{EMNIST}, a classification dataset consisting of $62$ classes (images of digits, small and capital letters). To speed up computation, we reduced the number of features from $784$ down to $10$ through a standard dimensionality reduction method. We created $61$ binary classification tasks by considering the $0$ digit class against each other class. To each task, we assigned $10$ examples ($5$ positive, $5$ negative) randomly chosen from the set of examples for that task. We considered the linear logistic regression and ran Hedge-MT-EG with the parameterized clique interaction matrix $A(b) = (1+b)I_N - b \mathbbm{1}\mathbbm{1}^\top/N$. The value of $b$ is set according to the theoretical value (that depends on $\sigma^2$, see \Cref{prop:regret_eg}), while $\sigma^2$ is learned in a set of $5$ experts uniformly spaced over $[0,1]$. For all algorithms, the value of $\eta$ is set according to the optimal theoretical values. Results are reported in \Cref{fig:omd_new}.

\begin{figure*}[t!]
\centering
\subfigure[Lenk (OGD, cumulative square loss)]{\label{fig:ogd_new}\includegraphics[width=0.4\textwidth]{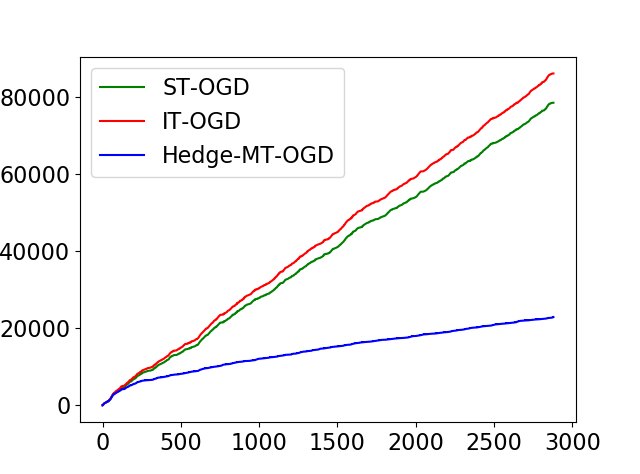}}
\qquad
\subfigure[EMNIST (EG, cumulative logistic loss)]{\label{fig:omd_new}\includegraphics[width=0.4\textwidth]{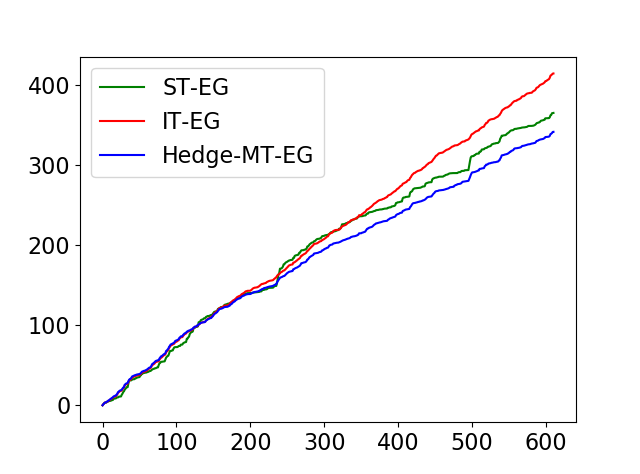}}
\caption{Comparison between multitask (MT), independent-task (IT), and single-task (ST) OGD and EG on the \emph{Lenk} and \emph{EMNIST} datasets. We plot the cumulative losses against time.
\emph{Lenk} is known to work well in multi-task settings, and indeed Hedge-MT-OGD performs significantly better than both baselines. On the other hand, \emph{EMNIST} has a variance significantly higher than \emph{Lenk}. However, even in this unfavorable scenario, Hedge-MT-EG is still outperforming the baselines, though by a small margin.}
\label{fig:exp}
\end{figure*}
\section{Conclusion}
\label{sec:conclusion}

We introduced and analyzed MT-OMD, a multitask extension of OMD whose regret is shown to improve as the task variance, expressed in terms of the geometry induced by the regularizer, decreases.
We provided a unifying analysis and a single algorithm that explains when is multitask acceleration possible based on the current geometry, and how to achieve it.
Natural and interesting directions for future research include: (1) analyzing the multitask acceleration in combination with other properties, such as strongly \mbox{convex} losses, and (2) designing and analyzing an extension of MT-OMD that is adaptive to the best interaction matrix.

\bibliography{ref}
\bibliographystyle{tmlr}

\newpage
\appendix
\section{Technical Proofs}
\label{apx:proofs}

In this Appendix are gathered all the technical proofs of the results stated in the article.
First, we provide a notation table recalling the most important notation used along the paper.

\begin{table}[h!]
\centering
\begin{tabular}{l|l}
{\bf Notation} & {\bf Meaning}\\\toprule
$T \in \mathbb{N}$ & Time horizon\\
$N \in \mathbb{N}$ & Number of tasks\\
$d \in \mathbb{N}$ & Dimension of single reference vectors\\\midrule\midrule
$u_i \in \mathbb{R}^d$, for $i \le N$ & Reference vector (best model) for task $i$\\
$\bu = [u_1, \ldots, u_N] \in \mathbb{R}^{Nd}$ & Compound reference vector\\
$\bu^{(i)} \in \mathbb{R}^d$ & Block $i$ of the compound vector $\bu$ ($= u_i$ here)\\\midrule
$\bar{\bu} = (1/N)\sum_i \bu^{(i)}$ & Average reference vector\\
$\mathrm{Var}_{\|\cdot\|}(\bu) = (1/(N-1))\sum_i\|\bu^{(i)} - \bar{\bu}\|^2$ & Variance of the reference vectors w.r.t. norm $\|\cdot\|$\\
$V \subset \mathbb{R}^d$ & Generic set of comparators for a single task\\
$D = \sup_{u \in V}\|u\|$ & Half diameter of $V$ w.r.t.\ norm $\|\cdot\|$\\
$\bV = \{\bu \in \mathbb{R}^{Nd} \colon \forall i \le N, \bu^{(i)} \in V\}$ & Generic set of compound comparators\\
$\bV_{\|\cdot\|, \sigma} = \{\bu \in \bV \colon \mathrm{Var}_{\|\cdot\|}(\bu) \le \sigma^2D^2\}$ & Comparators in $\bV$ with small variance\\\midrule
$\bu^{\min}_j = \min_{i \le N} \bu^{(i)}_j$ & Minimum (among tasks) of component $j \le d$\\
$\bu^{\max}_j = \max_{i \le N} \bu^{(i)}_j$ & Maximum (among tasks) of component $j \le d$\\
$\mathrm{Var}_{\Delta}(\bu) = \max_{j \le d} (1 - \bu^{\min}_j / \bu^{\max}_j)^2$ & Variance aligned with the probability simplex $\Delta$\\
$\Delta = \{u \in \mathbb{R}_+^d\colon \sum_j u_j = 1\}$ & Probability simplex in $\mathbb{R}^d$\\
$\bm{\Delta} = \{\bu \in \mathbb{R}^{Nd} \colon \forall i \le N, \bu^{(i)} \in \Delta\}$ & Compound probability simplex\\
$\bm{\Delta}_\sigma = \{\bu \in \bm{\Delta} \colon \mathrm{Var}_\Delta(\bu) \le \sigma^2\}$ & Comparators in $\bm{\Delta}$ with small variance\\\midrule\midrule
$\mathbbm{1} \in \mathbb{R}^N$ & Vector filled with $1$'s\\
$I_N \in \mathbb{R}^{N \times N}$ & Identity matrix of dimension $N$\\
$A \in \mathbb{R}^{N \times N}$ & Generic interaction matrix\\
$A(b) = (1+b)I_N - (b/N) \mathbbm{1}\mathbbm{1}^\top$ & Parameterized interaction matrix\\
$\bA = A \otimes I_d \in \mathbb{R}^{Nd \times Nd}$ & Block version (Kronecker product with $I_d$) of $A$\\
$\bA(b) = A(b) \otimes I_d$ & Block version of $A(b)$\\\midrule\midrule
$x_{i, t} \in \mathbb{R}^d$, for $i \le N$ and $t \le T$ & Prediction maintained by agent $i$ at time $t$\\
$\bx_t = [x_{1, t}, \ldots, x_{N, t}] \in \mathbb{R}^{Nd}$ & Compound predictions maintained at time $t$\\
$i_t \in \mathbb{N}$ & Active agent at time $t$ (chosen by the adversary)\\\midrule
$\ell_t \colon \mathbb{R}^d \rightarrow \mathbb{R}$ & Loss function at time $t$ (chosen by the adversary)\\
$g_t = \partial \ell_t(\bx_t^{(i_t)}) \in \mathbb{R}^d$ & Subgradient of $\ell_t$ at point $\bx_t^{(i_t)}$\\
$\bm{g}_t = [0, \ldots, 0, g_t, 0, \ldots, 0] \in \mathbb{R}^{Nd}$ & Compound subgradient (null outside block $i_t$)\\\midrule
$\psi\colon V \rightarrow \mathbb{R}$ & Generic base regularizer\\
$B_\psi \colon V \times V \rightarrow \mathbb{R}$ & Bregman divergence induced by $\psi$\\
$\bpsi \colon \bu \in \mathbb{R}^{Nd} \mapsto \sum_i \psi(\bu^{(i)})$ & Compound regularizer\\
$\reg = \bpsi(\sqA \cdot~)$ & Regularizer of interest\\
\bottomrule
\end{tabular}
\medskip
\caption{Notation Table.}
\label{tab:my_label}
\end{table}

\subsection{Proof of Theorem~\ref{thm:gen_regret}}

The first important building block of our proof consists in characterizing the strong convexity of $\reg$ (\Cref{lem:strong}).
To that end, we need to introduce the following definition and notation about norms.
\vspace{1.5cm}

\begin{definition}
Let $\mathcal{N}\colon \mathbb{R}^d \rightarrow \mathbb{R}$, and $\mathcal{N}'\colon \mathbb{R}^N \rightarrow \mathbb{R}$ be two norms.
We define the mixed norm $\|\cdot\|_{\mathcal{N}, \mathcal{N}'}\colon \mathbb{R}^{Nd} \rightarrow \mathbb{R}$ as follows\footnote{Note that a sufficient condition for $\|\cdot\|_{\mathcal{N}, \mathcal{N}'}$ to be a norm is that $\mathcal{N}'(z_1, \ldots, z_N) = \mathcal{N}'(|z_1|, \ldots, |z_N|)$. In \Cref{lem:strong} we use $\mathcal{N}' = \|\cdot\|_2$, which satisfies this property.}.
For all $\bx \in \mathbb{R}^{Nd}$, $\|\bx\|_{\mathcal{N}, \mathcal{N}'}$ is given by the $\mathcal{N'}$ norm of the $\mathbb{R}^N$ vector composed of the $\mathcal{N}$ norms of the blocks $(\bx^{(i)})_{i=1}^N$.
Formally, it reads
\[
\forall \bx \in \mathbb{R}^{Nd}, \qquad \|\bx\|_{\mathcal{N}, \mathcal{N}'} = \mathcal{N}'\left(\mathcal{N}\big(\bx^{(1)}\big), \ldots, \mathcal{N}\big(\bx^{(N)}\big)\right)\,.
\]
When $\mathcal{N} = \|\cdot\|_p$ and $\mathcal{N}' = \|\cdot\|_q$, for $p, q \in [1, +\infty]$, one recovers the standard $\ell_{p,q}$ mixed norm.
We use the following shortcut notation
\[
\|\cdot\|_{p, \mathcal{N}'} \coloneqq \|\cdot\|_{\|\cdot\|_p, \mathcal{N}'} \quad \text{and} \quad \|\cdot\|_{\mathcal{N}, q} \coloneqq \|\cdot\|_{\mathcal{N}, \|\cdot\|_q}\,.
\]
Let $s \in \mathbb{N}$, $\|\cdot\|\colon \mathbb{R}^s \rightarrow \mathbb{R}$ be any norm, and $M \in \mathbb{R}^{s \times s}$ be a symmetric positive definite matrix.
We~define the Mahalanobis version of $\|\cdot\|$, denoted $\|\cdot\|_M$, as
\[
\forall x \in \mathbb{R}^s, \qquad \|x\|_M = \big\|M^{1/2}x\big\|\,.
\]
Notice that for $\|\cdot\| = \|\cdot\|_2$, we recover the standard Mahalanobis norm such that $\|x\|_{2, M} = \sqrt{x^\top M x}$.
For simplicity, and when it is clear from the context, we use the shortcut notation $\|\cdot\|_M = \|\cdot\|_{2, M}$.
\end{definition}

We are now equipped to establish the strong convexity of $\reg$.
\begin{lemma}\label{lem:strong}
Assume that the base regularizer $\psi\colon \mathbb{R}^d \rightarrow \mathbb{R}$ is $\lambda$-strongly convex with respect to some norm $\|\cdot\|$.
Then, the associated compound regularizer $\reg$ is $\lambda$-strongly convex with respect to the Mahalanobis mixed norm $\|\cdot\|_{\|\cdot\|, 2, \bA}$.
\end{lemma}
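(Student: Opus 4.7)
The plan is to reduce the strong convexity of $\reg = \bpsi(\sqA\,\cdot\,)$ to that of $\bpsi$ via the linear change of variables $\bz = \sqA\bx$. First I would observe that since $\psi$ is $\lambda$-strongly convex with respect to $\|\cdot\|$, the compound regularizer $\bpsi(\bz) = \sum_{i=1}^N \psi(\bz^{(i)})$ is $\lambda$-strongly convex with respect to the mixed norm $\|\cdot\|_{\|\cdot\|,2}$. Indeed, summing the $N$ individual strong-convexity inequalities yields
\begin{equation*}
\bpsi(\bz) \ge \bpsi(\bw) + \langle \nabla\bpsi(\bw), \bz - \bw\rangle + \frac{\lambda}{2}\sum_{i=1}^N \big\|\bz^{(i)}-\bw^{(i)}\big\|^2,
\end{equation*}
and by the definition of the mixed norm the last sum equals $\|\bz-\bw\|_{\|\cdot\|,2}^2$.

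Next I would apply this inequality to $\bz = \sqA\bx$ and $\bw = \sqA\by$. The left-hand side becomes $\reg(\bx)$ and the first term on the right becomes $\reg(\by)$. For the linear term, the chain rule together with the symmetry of $\sqA$ gives $\nabla\reg(\by) = \sqA\,\nabla\bpsi(\sqA\by)$, so that
\begin{equation*}
\langle \nabla\bpsi(\sqA\by), \sqA(\bx-\by)\rangle = \langle \sqA\,\nabla\bpsi(\sqA\by), \bx-\by\rangle = \langle \nabla\reg(\by), \bx-\by\rangle.
\end{equation*}
For the quadratic term, the Mahalanobis mixed norm is defined precisely so that $\|\sqA(\bx-\by)\|_{\|\cdot\|,2} = \|\bx-\by\|_{\|\cdot\|,2,\A}$, which delivers the desired form.

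Combining these three identities yields exactly the $\lambda$-strong convexity of $\reg$ with respect to $\|\cdot\|_{\|\cdot\|,2,\A}$. The argument is essentially the standard fact that strong convexity is preserved under invertible linear reparameterizations, with the strong-convexity norm pulled back through the transformation; the only subtlety is bookkeeping to confirm that the Mahalanobis mixed norm used in the statement really matches the pullback of $\|\cdot\|_{\|\cdot\|,2}$ under $\sqA$, which is immediate from the definitions given. I do not anticipate a genuine obstacle here; the delicate parts of the overall analysis will instead surface when computing the dual norm (needed to bound $\|g_t\|_\star$ terms in the regret) and the diagonal entries of $A(b)^{-1}$ in the subsequent lemmas.
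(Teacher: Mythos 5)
Your proof is correct and follows the same underlying reduction as the paper — decompose $\bpsi$ blockwise and pull the norm back through the invertible linear map $\sqA$ — but via a different characterization of strong convexity. The paper argues through the second-order condition: it writes $\bm{H}_{\reg}(\bx) = \sqA\,\bm{H}_{\bpsi}(\sqA\bx)\,\sqA$, exploits the block-diagonal structure of $\bm{H}_{\bpsi}$, and lower-bounds $\langle \bm{H}_{\reg}(\bx)\bv,\bv\rangle$ by $\lambda\|\bv\|_{\|\cdot\|,2,\A}^2$; this implicitly requires $\psi$ to be twice differentiable. Your first-order (subgradient-inequality) argument needs no such smoothness and is therefore slightly more general, at the cost of having to track the gradient identity $\nabla\reg(\by) = \sqA\,\nabla\bpsi(\sqA\by)$ explicitly, which you do correctly using the symmetry of $\sqA$. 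Your identification of the quadratic term $\|\sqA(\bx-\by)\|_{\|\cdot\|,2} = \|\bx-\by\|_{\|\cdot\|,2,\A}$ matches the paper's definition of the Mahalanobis mixed norm, so no gap remains.
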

\begin{proof}
First, notice that for every $\bx \in \mathbb{R}^{Nd}$ it holds
\begin{equation*}
\bm{H}_{\reg}(\bx) = \sqA~\bm{H}_{\bm{\psi}}\big(\sqA\bx\big)~\sqA\,,
\end{equation*}
where $\bm{H}_\bpsi(\bx) \in \mathbb{R}^{Nd \times Nd}$ denotes the Hessian matrix of regularizer $\bpsi$ at point $\bx$.
Moreover, due to the definition of the compound regularizer $\bm{\psi}\colon \bx \in \mathbb{R}^{Nd} \mapsto \sum_{i=1}^N \psi\big(\bx^{(i)}\big)$, matrix $\bm{H}_\bpsi(\bx)$ is block diagonal and given by
\begin{equation*}
\bm{H}_{\bm{\psi}}(\bx) = \begin{pmatrix}
H_\psi(\bx^{(1)}) & 0 & \ldots & 0\\
0 & H_\psi(\bx^{(2)}) & \ldots & 0\\
\vdots & \vdots & \ddots & \vdots\\
0 & \ldots & 0 & H_\psi(\bx^{(N)})
\end{pmatrix}\,.
\end{equation*}
Assuming that $\psi$ is $\lambda$-strongly convex with respect to norm $\|\cdot\|$, it thus holds for all $\bx, \bv \in \mathbb{R}^{Nd}$
\begin{align*}
    \left\langle \bm{H}_{\reg}(\bx) \bv, \bv\right\rangle &= \left\langle \bm{H}_{\bm{\psi}}\big(\sqA\bx\big)\big (\sqA\bv\big), \sqA\bv\right\rangle\\
    &= \sum_{i=1}^N \left\langle H_\psi\left(\big(\sqA\bx\big)^{(i)}\right) \big(\sqA\bv\big)^{(i)}, \big(\sqA\bv\big)^{(i)}\right\rangle\\
    &\ge \lambda~\sum_{i=1}^N \left\|\big(\sqA\bv\big)^{(i)}\right\|^2\\
    &= \lambda~\|\bv\|_{\|\cdot\|, 2, \bA}^2\,.
\end{align*}
\end{proof}
The second important intermediate result (\Cref{lem:dual}) deals with dual norms, that we recall now.

\begin{definition}\label{def:dual_norm}
Let $s \in \mathbb{N}$, and $\|\cdot\| \colon \mathbb{R}^s \rightarrow \mathbb{R}$ be any norm.
The \emph{dual norm} of $\|\cdot\|$, denoted $\|\cdot\|_\star$, is defined as
\[
\forall x \in \mathbb{R}^s, \qquad \|x\|_\star = \sup_{y \in \mathbb{R}^s\colon \|y\| \le 1} \langle x, y \rangle\,.
\]
\end{definition}

%
\begin{lemma}\label{lem:dual}
Let $p, q \in [1, +\infty]$, and $p^\star, q^\star$ their conjugates such that $1/p + 1/p^\star = 1/q + 1/q^\star = 1$.
Then it holds
\[
\left(\|\cdot\|_p\right)_\star = \|\cdot\|_{p^\star}, \qquad \text{and} \qquad \left(\|\cdot\|_{p, q}\right)_\star = \|\cdot\|_{p^\star, q^\star}\,.
\]
Let $s \in \mathbb{N}$, $\|\cdot\| \colon \mathbb{R}^s \rightarrow \mathbb{R}$ be any norm, and $M \in \mathbb{R}^{s \times s}$ be a symmetric positive definite matrix.
Then it holds
\[
\left(\|\cdot\|_M\right)_\star = \|\cdot\|_{\star, M^{-1}}\,.
\]
Assume furthermore that $s$ can be decomposed into $s = s_1 \times s_2$, for $s_1, s_2 \in \mathbb{N}$.
Then, for any norm $\|\cdot\|\colon \mathbb{R}^{s_1} \rightarrow \mathbb{R}$ it holds
\[
\left(\|\cdot\|_{\|\cdot\|, 2}\right)_\star = \|\cdot\|_{\|\cdot\|_\star, 2}\,.
\]
In particular, combining the last two results yields: $\left(\|\cdot\|_{\|\cdot\|, 2, \bA}\right)_\star = \|\cdot\|_{\|\cdot\|_\star, 2, \Ai}$.
\end{lemma}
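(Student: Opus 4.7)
The plan is to verify each identity in turn, with the last two statements being the ones requiring the most care; the first two are standard textbook facts.

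First I would cite the classical $L^p$ duality result, which gives $(\|\cdot\|_p)_\star = \|\cdot\|_{p^\star}$ as an immediate consequence of Hölder's inequality together with the usual tightness example (take $y_i = \mathrm{sign}(x_i)\,|x_i|^{p^\star/p}/\|x\|_{p^\star}^{p^\star/p}$). The mixed-norm identity $(\|\cdot\|_{p,q})_\star = \|\cdot\|_{p^\star,q^\star}$ follows by applying Hölder's inequality twice: once within each block (yielding a bound by $\|\bx^{(i)}\|_{p^\star}\|\by^{(i)}\|_p$), and once across blocks in the $\ell_q$/$\ell_{q^\star}$ sense. Tightness is obtained by taking $\by^{(i)}$ to be a scalar multiple of the primal witness for $\|\bx^{(i)}\|_{p^\star}$, with the block magnitudes chosen as the $\ell_q$ dual witness for the vector $(\|\bx^{(1)}\|_{p^\star},\ldots,\|\bx^{(N)}\|_{p^\star})$.

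For the Mahalanobis identity $(\|\cdot\|_M)_\star = \|\cdot\|_{\star,M^{-1}}$, the natural approach is a change of variables. Writing $\|y\|_M = \|M^{1/2}y\|$ and substituting $z = M^{1/2}y$ in the supremum,
\begin{equation*}
(\|\cdot\|_M)_\star(x) = \sup_{\|M^{1/2}y\|\le 1}\langle x,y\rangle = \sup_{\|z\|\le 1}\langle M^{-1/2}x,z\rangle = \|M^{-1/2}x\|_\star,
\end{equation*}
which is exactly $\|x\|_{\star,M^{-1}}$ since $M^{-1/2} = (M^{-1})^{1/2}$ by symmetry of $M$. The only technicality is that $M^{1/2}$ must be invertible, which is guaranteed by positive definiteness.

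For the mixed-norm identity $(\|\cdot\|_{\|\cdot\|, 2})_\star = \|\cdot\|_{\|\cdot\|_\star, 2}$, I would decouple the supremum into a direction step and a magnitude step. Writing $r_i = \|\by^{(i)}\|$, the constraint $\|\by\|_{\|\cdot\|, 2}\le 1$ becomes $\sum_i r_i^2 \le 1$. For each fixed vector of magnitudes $(r_i)$, maximizing over the direction of $\by^{(i)}$ yields $\langle \bx^{(i)}, \by^{(i)}\rangle \le \|\bx^{(i)}\|_\star\, r_i$ by the very definition of the dual norm. Then Cauchy--Schwarz on $\sum_i \|\bx^{(i)}\|_\star r_i$ under $\sum_i r_i^2 \le 1$ gives exactly $\sqrt{\sum_i\|\bx^{(i)}\|_\star^2} = \|\bx\|_{\|\cdot\|_\star, 2}$, and equality is attained by choosing the $\by^{(i)}$ as appropriately rescaled primal witnesses of the $\|\bx^{(i)}\|_\star$'s. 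The last claim is then immediate by composing the Mahalanobis identity (applied to the norm $\|\cdot\|_{\|\cdot\|,2}$ with matrix $\A$) with the mixed-norm identity just proved.

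The main obstacle I expect is the justification of the decoupling in the proof of (4): one has to be a bit careful that the blockwise sup over directions and the joint sup over magnitudes commute, which is true here because the constraint involves only the magnitudes $r_i$ and the objective splits as a sum of per-block terms each depending linearly on its own block. Everything else is essentially bookkeeping with change of variables and Hölder/Cauchy--Schwarz inequalities.
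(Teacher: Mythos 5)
Your proof is correct, but it reaches the two nontrivial identities by a different route than the paper. For the Mahalanobis and block-mixed-norm claims, the paper goes through the Fenchel--Legendre identity $\tfrac{1}{2}\|\cdot\|_\star^2 = \big(\tfrac{1}{2}\|\cdot\|^2\big)^\star$ and computes the conjugate of $\tfrac{1}{2}\|\cdot\|_M^2$ (via the substitution $z = M^{1/2}y$) and of $\tfrac{1}{2}\|\cdot\|_{\|\cdot\|,2}^2$ (via separability of the sup across blocks), then reads off the dual norms. You instead work directly with the definition $\|x\|_\star = \sup_{\|y\|\le 1}\langle x,y\rangle$: the same change of variables handles the Mahalanobis case, and a magnitude/direction decoupling followed by Cauchy--Schwarz handles the block case. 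Both arguments are sound; yours is more elementary and self-contained (no conjugate calculus needed, and your justification of the decoupling --- the constraint depends only on the magnitudes $r_i$ while the objective splits linearly over blocks --- is exactly the point that needs saying), while the paper's version sidesteps any discussion of attainment by staying at the level of suprema of concave functions. You also supply Hölder-based proofs of the $\ell_p$ and $\ell_{p,q}$ duality claims, which the paper simply cites (Boyd--Vandenberghe for the former, Sra's Lemma~3 for the latter). The final composition step is identical in both treatments.
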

\begin{proof}
The first result is standard in convex analysis, see \textit{e.g.} Appendix A.1.6 in \cite{boyd2004}.
The second result is due to \cite{sra2012}, see Lemma 3 therein.
The last two results rely on the following equality (see \textit{e.g.} Example 3.27 in \cite{boyd2004}).
For any norm $\|\cdot\|$, it holds
\begin{equation}\label{eq:boyd}
\frac{1}{2} \|\cdot\|_\star^2 = \left(\frac{1}{2}\|\cdot\|^2\right)^\star\,,
\end{equation}
where $f^\star$ denotes the Fenchel-Legendre conjugate of $f$, defined as $f^\star(x) = \sup_y \langle x, y \rangle - f(y)$ \cite{bauschke2011}.
Applying \Cref{eq:boyd} to $\|\cdot\|_M$, we get for all $x \in \mathbb{R}^s$
\begin{align*}
    \left(\frac{1}{2} \left(\|\cdot\|_M \right)_\star^2\right)(x) &= \left(\frac{1}{2}\|\cdot\|_M^2\right)^\star(x)\\
    &= \sup_{y \in \mathbb{R}^s}\left\{\langle x, y \rangle - \frac{1}{2} \|y\|_M^2\right\}\\
    &= \sup_{y\in \mathbb{R}^s}\left\{\langle M^{-1/2}x, M^{1/2}y \rangle - \frac{1}{2} \big\|M^{1/2}y\big\|^2\right\}\\
    &= \sup_{z\in \mathbb{R}^s}\left\{\langle M^{-1/2}x, z \rangle - \frac{1}{2} \|z\|^2\right\}\\
    &= \left(\frac{1}{2}\|\cdot\|^2\right)^\star\big(M^{-1/2}x\big)\\
    &= \frac{1}{2}\big\|M^{-1/2} x \big\|_\star^2
    = \left(\frac{1}{2}\|\cdot\|_{\star, M^{-1}}^2\right)(x)\,.
\end{align*}

Applying \Cref{eq:boyd} to $\|\cdot\|_{\|\cdot\|, 2}$, we get for any $\bx \in \mathbb{R}^{s_1 \cdot s_2}$
\begin{align*}
    \left(\frac{1}{2} \left(\|\cdot\|_{\|\cdot\|, 2}\right)_\star^2\right)(\bx) &= \left(\frac{1}{2}\|\cdot\|_{\|\cdot\|, 2}^2\right)^\star(\bx)\\
    &= \sup_{\by \in \mathbb{R}^{s_1 \cdot s_2}}\left\{\langle \bx, \by \rangle - \frac{1}{2} \|\by\|_{\|\cdot\|, 2}^2\right\}\\
    &= \sup_{\by \in \mathbb{R}^{s_1 \cdot s_2}} \left\{ \sum_{i=1}^{s_2} \langle \bx^{(i)}, \by^{(i)} \rangle - \frac{1}{2}\sum_{i=1}^{s_2} \big\|\by^{(i)}\big\|^2\right\}\\
    &= \sum_{i=1}^{s_2}~\sup_{\by^{(i)} \in \mathbb{R}^{s_1}} \left\{\langle \bx^{(i)}, \by^{(i)} \rangle - \frac{1}{2}\big\|\by^{(i)}\big\|^2\right\}\\
    &= \sum_{i=1}^{s_2}~\left(\frac{1}{2}\|\cdot\|^2\right)^\star\big(\bx^{(i)}\big)\\
    &= \frac{1}{2}\sum_{i=1}^{s_2}~\big\|\bx^{(i)}\big\|_\star^2
    = \left(\frac{1}{2} \|\cdot\|_{\|\cdot\|_\star, 2}^2\right)(\bx)\,.
\end{align*}
\end{proof}

We are now ready to prove \Cref{thm:gen_regret}.
The proof follows from standard arguments to analyze OMD, combined with \Cref{lem:strong,lem:dual}.
\par{\bf Proof of \Cref{thm:gen_regret}.}
First, notice that the compound representation allows to write
\[
R_T = \sum_{t=1}^T \ell_t(x_t) - \sum_{i=1}^N \inf_{u \in V} \sum_{t\colon i_t=i} \ell_t(u) = \inf_{\bu \in \bV}~\sum_{t=1}^T \ell_t(x_t) - \ell_t\big(\bu^{(i_t)}\big)\,.
\]
Next, for any $\bu \in \bV$, the convexity and sub-differentiability of $\ell_t$ implies
\[
\sum_{t=1}^T \ell_t(x_t) - \ell_t\big(\bu^{(i_t)}\big) \le \sum_{t=1}^T \left\langle g_t, x_t - \bu^{(i_t)} \right\rangle = \sum_{t=1}^T \left\langle \bm{g}_t, \bx_t - \bu \right\rangle\,.
\] 
Now, observe that update \Cref{eq:omd_update_coop} actually defines an OMD on iterate $\bx_t$, for the sequence of gradients $(\bm{g}_t)_{t=1}^T$, and with regularizer $\reg$.
Recall also that $\reg$ is $\lambda$-strongly convex with respect to $\|\cdot\|_{\|\cdot\|, 2, \bA}$ (\Cref{lem:strong}), whose dual norm is $\|\cdot\|_{\|\cdot\|_\star, 2, \Ai}$ (\Cref{lem:dual}).
Applying the standard OMD bound of \Cref{eq:std_omd_bound}, we thus obtain that for all $\bu \in \bV$ and $\eta > 0$ it holds
\[
\sum_{t=1}^T \left\langle \bm{g}_t, \bx_t - \bu \right\rangle \le \frac{B_\reg(\bu, \bx_1)}{\eta} + \frac{\eta}{2\lambda} \sum_{t=1}^T \|\bm{g}_t\|_{\|\cdot\|_\star, 2, \Ai}^2\,.
\]
Then, we use that for all $\bx, \by \in \mathbb{R}^{Nd}$ it holds $B_\reg(\bx, \by) = B_\bpsi \big(\sqA \bx, \sqA \by\big)$, and that
\begin{align*}
\|\bm{g}_t\|_{\|\cdot\|_\star, 2, \Ai}^2 &= \left\|\sqAi \bm{g}_t\right\|_{\|\cdot\|_\star, 2}^2\\
&= \sum_{i=1}^N \left\|A^{-1/2}_{i i_t} g_t \right\|_\star^2\\
&= \sum_{i=1}^N \left(A^{-1/2}_{i i_t}\right)^2 \|g_t\|_\star^2\\
%
%
%
%
&= A^{-1}_{i_t i_t}~\|g_t\|_\star^2\\
&\le \max_{i \le N} A^{-1}_{ii}~\|g_t\|_\star^2\,.
\end{align*}
Combining all arguments, we finally obtain
\[
R_T(\bu) \le \frac{B_\bpsi\big(\sqA\bu, \sqA\bx_1\big)}{\eta} + \max_{i \le N} A^{-1}_{ii} \frac{\eta}{2\lambda} \sum_{t=1}^T \|g_t\|_\star^2\,.
\]
The second claim of the theorem is obtained in a similar fashion.
Standard OMD results \citep[Theorem 6.8]{orabona2019} give that
\[
R_T(\bu) \le \max_{t \le T} \frac{B_\reg(\bu, \bx_t)}{\eta_T} + \frac{1}{2\lambda} \sum_{t=1}^T \eta_t \|\bm{g}_t\|_{\|\cdot\|_\star, 2, \Ai}^2\,.
\]
Replacing $B_\reg(\bu, \bx_t)$ and $\|\bm{g}_t\|_{\|\cdot\|_\star, 2, \Ai}^2$ with $B_\bpsi\big(\sqA\bu, \sqA\bx_t\big)$ and $\max\limits_{i \le N} A^{-1}_{ii}~\|g_t\|_\star^2$ respectively yields the desired result.
\qed
\subsection{Proof of Proposition~\ref{prop:regret_ogd}}
\label{apx:proof_prop_ogd}

First, it is easy to check that we have
\begin{align*}
A(b) &= (1+b)I_N - b\frac{\mathbbm{1}\mathbbm{1}^\top}{N}\\[0.2cm]
A(b)^{1/2} &= \sqrt{1+b}~I_N + (1 - \sqrt{1+b})\frac{\mathbbm{1}\mathbbm{1}^\top}{N}\\[0.2cm]
A(b)^{-1} &= \frac{1}{1+b}I_N + \frac{b}{(1+b)}\frac{\mathbbm{1}\mathbbm{1}^\top}{N}.
\end{align*}
This implies
\begin{equation}\label{eq:inverse}
\max_{i \le N}~[A(b)^{-1}]_{ii} = \frac{b+N}{(1+b)N},
\end{equation}
and
\begin{align}
2 B_{\bpsi}\Big(\bA(b)^{1/2}\bu, \bA(b)^{1/2}\bm{0}\Big) &= \sum_{i=1}^N \left\| \left(\bA(b)^{1/2} \bu\right)^{(i)}\right\|_2^2\nonumber\\
&= \sum_{i=1}^N \left\|\sqrt{1+b}~\bu^{(i)} + (1 - \sqrt{1+b})\bar{\bu}\right\|_2^2\nonumber\\
&= \sum_{i=1}^N \left\|\sqrt{1+b}~(\bu^{(i)} - \bar{\bu}) + \bar{\bu}\right\|_2^2\nonumber\\
&= (1+b)\sum_{i=1}^N\big\|\bu^{(i)} - \bar{\bu}\big\|_2^2 + \sum_{i=1}^N \|\bar{\bu}\|_2^2 + 2\sqrt{1+b} \sum_{i=1}^N \big\langle \bu^{(i)} - \bar{\bu} , \bar{\bu}\big\rangle\nonumber\\
&= \sum_{i=1}^N \left(\big\|\bu^{(i)} - \bar{\bu}\big\|_2^2 + \|\bar{\bu}\|_2^2\right) + b(N-1)\mathrm{Var}_{\|\cdot\|_2}(\bu)\nonumber\\
&= \sum_{i=1}^N \big\|\bu^{(i)}\big\|_2^2 + b(N-1)\mathrm{Var}_{\|\cdot\|_2}(\bu)\,.\label{eq:variance}
\end{align}
Substituting \Cref{eq:inverse}, \Cref{eq:variance} into \Cref{eq:bound_gen_regret_cst}, and using the definition of $\bV_{\|\cdot\|_2, \sigma}$, we get that $R_T(\bu)$ is smaller than
\begin{equation*}
\frac{\|\bu\|_2^2 + b(N-1) \mathrm{Var}_{\|\cdot\|_2}(\bu)}{2\eta} + \frac{\eta TL_g^2}{2}\frac{b+N}{(1+b)N} ~~\le~~ \frac{ND^2(1 + b \frac{N-1}{N}\sigma^2)}{2\eta} + \frac{\eta TL_g^2}{2}\frac{b+N}{(1+b)N}\,.
\end{equation*}
Setting $\eta = \frac{ND}{L_g}\sqrt{\frac{\big(1 + b\frac{N-1}{N}\sigma^2\big)(1+b)}{(b+N)T}}$, we get
\[
R_T \le DL_g\sqrt{\frac{\big(1 + b\frac{N-1}{N}\sigma^2\big)(b+N)}{1+b}T}\,.
\]
We now optimize on $b$.
Let $\alpha = \frac{N-1}{N}\sigma^2$, the optimality condition writes
\begin{gather*}
(1 + \alpha N  + 2 \alpha b^*)(1 + b^*) = (1 + \alpha b^*)(b^* + N)\\
1 + (\alpha - 1)N + 2 \alpha b^* + {\alpha b^*}^2 = 0\\
b^* = \sqrt{\frac{1-\alpha}{\alpha} (N-1)} - 1 = \sqrt{1 + \frac{1 - \sigma^2}{\sigma^2}N} - 1\,.
\end{gather*}
And we have
\begin{align*}
\frac{(1 + \alpha b^*)(b^* + N)}{1 + b^*} &= 1 + \alpha N + 2 \alpha b^*\\
&= 1 + \sigma^2(N-1) + 2\sigma^2 \frac{N-1}{N}\left(\sqrt{1 + \frac{1 - \sigma^2}{\sigma^2}N} - 1\right)\,.
\end{align*}
Hence, the bound becomes $DL_g \sqrt{F(\sigma) ~ T}$, with
\[
F(\sigma) = 1 + \sigma^2(N-1) + 2\sigma^2\frac{N-1}{N}\left(\sqrt{1 + \frac{1-\sigma^2}{\sigma^2}N} - 1\right)\,.
\]

Note that
\begin{align*}
2\alpha b^* \le 2 \sqrt{\alpha(1-\alpha) (N- 1)} \le  \alpha(1-\alpha)(N-1) + 1 \le 1 + \alpha N\,,
\end{align*}
so that we have $F(\sigma) \le 2(1 + \sigma^2(N-1))$, and the bound becomes $DL_g\sqrt{1 + \sigma^2(N-1)}\sqrt{2T}$, a value which can also be achieved directly by setting $b=N$ above.
\qed
\subsection{Proof of Proposition~\ref{prop:lower_bound}}
\label{apx:proof_lower_bound}

This lower bound is proved by adapting the standard lower bound for (single-agent) Online Linear Optimization, see, e.g., \citep[Chapter 5]{orabona2019}.
We consider linear losses, i.e., we assume that for all $t \le T$ there exists $g_t \in \mathbb{R}^d$ such that $\ell_t(x) = \left\langle g_t, x\right\rangle$.
Our goal is to show that for any any sequence $\bx_1, \ldots, \bx_T$ we can construct a sequence of losses (or equivalently of $g_t$) such that the regret is lower bounded.
Recall that $d$ is the dimension of the set $\bm{V}$ and $D > 0$ its radius, and let $\sigma < 1$.
Assume that $N$ is even, smaller than $2d$, and for all $i \in \{1, \ldots, N/2\}$, let
\[
\bw^{(2i - 1)} = -\sqrt{(N-1)/N}D\sigma e_i \text{\qquad and \qquad} \bw^{(2i)} = \sqrt{(N-1)/N}D\sigma e_i\,,
\]
where $(e_i)_{i \le d}$ is the canonical basis of $\mathbb{R}^d$.
It is easy to check that $\|\bw^{(i)}\|_2 \le D$, and that $\mathrm{Var}(\bw) = \sigma^2 D^2$, such that $\bm{w} \in \bm{V}_{\|\cdot\|_2, \sigma}$.
Assume that $N$ divides $T$, and that the agents are activated in a cyclic fashion, i.e., $i_t = 1 + t \text{ mod } N$.
We introduce the family of gradient vectors
\begin{equation}\label{eq:LB_gradients}
g_t = \epsilon_{\lceil t/N \rceil} \frac{L_g\bw^{(i_t)}}{\|\bw^{(i_t)}\|_2}\,,
\end{equation}
where $\epsilon_1, \ldots, \epsilon_{T/N}$ are valued in $\{-1, 1\}$, with exact values to be determined later on.
Indeed, we show that for any sequence of predictions, there exists a choice of $\epsilon_1, \ldots, \epsilon_{T/N}$ such that the regret is lower bounded.
To that end, we use the fact that for any function $F\colon \{-1, 1\}^{T/N} \rightarrow \mathbb{R}$, and any probability distribution $P$ with support in $\{-1, 1\}$, we have
\[
\sup_{\bm{\epsilon} \in \{-1, 1\}^{T/N}} F(\bm{\epsilon}) \ge \mathbb{E}_{\bm{\epsilon} \sim P^{\otimes T/N}}[F(\bm{\epsilon})]\,.
\]
In particular, we choose $P$ to be the Rademacher distribution, such that $\mathbb{P}\{\epsilon = -1\} = \mathbb{P}\{\epsilon = 1\} = 1/2$, and all expectations are now understood to be taken with respect to this distribution.
Note that we have $\|g_t\|_2 \le L_g$ and $\mathbb{E}_{\bm{\epsilon}}[g_t] = 0$, for all $t$. Given any sequence $\bx_1, \ldots, \bx_T$, we can use gradients \Cref{eq:LB_gradients} and obtain
\begin{align}
\nonumber
\sup_{\epsilon_1, \ldots, \epsilon_{T/N}} R_T &\ge \mathbb{E}_{\bm{\epsilon}}\left[\sum_{t=1}^T \big\langle g_t, \bx^{(i_t)} \big\rangle - \min_{\bm{u} \in \bm{V}_{\|\cdot\|_2, \sigma}} \sum_{t=1}^T \big\langle g_t, \bm{u}^{(i_t)}\big\rangle\right]\\
\nonumber
&= \mathbb{E}_{\bm{\epsilon}}\left[\max_{\bm{u} \in \bm{V}_{\|\cdot\|_2, \sigma}} \sum_{t=1}^T \epsilon_{\lceil t/N \rceil}\frac{L_g}{\|\bm{w}^{(i_t)}\|_2} \big\langle \bm{w}^{(i_t)}, \bm{u}^{(i_t)}\big\rangle\right]\\
\nonumber
&= \frac{L_g\sqrt{N}}{D\sigma\sqrt{N-1}} ~ \mathbb{E}_{\bm{\epsilon}}\left[\max_{\bm{u} \in \bm{V}_{\|\cdot\|_2, \sigma}} \sum_{t=1}^T \epsilon_{\lceil t/N\rceil}\big\langle \bm{w}^{(i_t)}, \bm{u}^{(i_t)}\big\rangle\right]\\
\nonumber
&= \frac{L_g\sqrt{N}}{D\sigma\sqrt{N-1}} ~ \mathbb{E}_{\bm{\epsilon}}\left[\max_{\bm{u} \in \bm{V}_{\|\cdot\|_2, \sigma}} \sum_{\tau=1}^{T/N} \epsilon_\tau \langle \bm{w}, \bm{u}\rangle\right]\\
\nonumber
&\ge \frac{L_g\sqrt{N}}{D\sigma\sqrt{N-1}} ~ \mathbb{E}_{\bm{\epsilon}}\left[\max_{\bm{u} \in \{-\bm{w}, \bm{w}\}} \sum_{\tau=1}^{T/N} \epsilon_\tau \langle \bm{w}, \bm{u}\rangle\right]\\
\label{eq:norm}
&\ge DL_g\sigma \sqrt{N(N-1)} ~ \mathbb{E}_{\bm{\epsilon}}\left|\sum_{\tau=1}^{T/N} \epsilon_\tau\right|\\
\label{eq:kin}
&\ge \frac{DL_g}{2} \sigma\sqrt{N-1} \sqrt{2T}\,,
\end{align}
where in~\Cref{eq:norm} we used $\|\bw\|_2^2 = (N-1)D^2\sigma^2$ and in~\Cref{eq:kin} we used the Khintchine inequality, see, e.g., \citep[Lemma~8.2]{cesa2006}.
We now combine this lower bound with the one obtained by choosing $i_t = 1$ for all $t$ and applying the standard single-agent lower bound $\frac{DL_g}{2}\sqrt{2T}$, see, e.g., \citep[Theorem~5.1]{orabona2019}. 
Combining these two bounds, we obtain that the regret is lower bounded by
\[
\frac{DL_g}{2}\sqrt{2T} \max\big\{1, \sigma \sqrt{N-1}\big\} \ge \frac{DL_g}{4}\sqrt{2T} \big(1 + \sigma \sqrt{N-1}\big) \ge \frac{1}{4} \Big(DL_g \sqrt{1 + \sigma^2(N-1)} \sqrt{2T}\Big)\,,
\]
which is only $1/4$ of the upper bound \Cref{eq:regret_ogd}.
Hence, with knowledge of $\sigma^2$, there is no algorithm with better multitask acceleration than MT-OMD, up to constant factors.
\qed
\subsection{Proof of Proposition~\ref{prop:separation}}
\label{apx:proof_separation}

Consider the following setting.
Let $N=2d$, $\sigma < 1$, $u_0 \in \mathbb{R}^d$ be such that $\|u_0\|_2^2 = 1 - \sigma^2$, and set for all $i \le d$:
\[
\bu^{(2i-1)} = u_0 - \sqrt{(N-1)/N}\sigma e_i\,, \text{\qquad and \qquad} \bu^{(2i)} = u_0 + \sqrt{(N-1)/N}\sigma e_i\,,
\]
where $(e_i)_{i \le d}$ is the canonical basis of $\mathbb{R}^d$.
It is easy to check that $\big\|\bu^{(i)}\big\|_2^2 \ge 1 -2\sigma^2$ for all $i \le N$, and that $\mathrm{Var}_{\|\cdot\|_2}(\bu) = \sigma^2$.
Then, a standard lower bound for OGD \citep[Theorem 5.1]{orabona2019} applied to the $N$ individual independent OGDs of IT-OGD with linear losses, unit-norm loss gradients, and cyclic activations such that $T_i = T/N$, gives that
\[
R_T^\mathrm{IT-OGD} \ge \sum_{i=1}^N \big\|\bu^{(i)}\big\|_2 \frac{\sqrt{2T_i}}{4} \ge \frac{\sqrt{2(1 - 2\sigma^2) NT}}{4}\,.
\]
This lower bound is strictly greater than the MT-OGD upper bound \Cref{eq:regret_ogd} as soon as $\frac{\sqrt{(1-2\sigma^2)N}}{4} > \sqrt{1 + \sigma^2(N-1)}$, or equivalently as soon as $\sigma^2 \le \frac{N-16}{18N -16}$.
\qed
\subsection{Proof of Proposition~\ref{prop:regret_norms}}
\label{apx:any_norm}
As discussed in the main body, the analysis for general norms is made more complex by the fact that \Cref{eq:variance} does not hold with equality anymore.
Instead, a series of approximations leveraging the properties of norms is required.
Indeed, for any norm $\|\cdot\|$, and regularizer $\psi = \frac{1}{2}\|\cdot\|^2$, it holds
\begin{align}
2 B_\bpsi\Big(\bA(b)^{1/2}\bu, \bA(b)^{1/2}\bm{0}\Big) &= \sum_{i=1}^N \left\| \left(\bA(b)^{1/2} \bu\right)^{(i)}\right\|^2\nonumber\\
&= \sum_{i=1}^N \left\|\sqrt{1+b}~\bu^{(i)} + (1 - \sqrt{1+b})\bar{\bu}\right\|^2\nonumber\\
&= \sum_{i=1}^N \left\|\bar{\bu} + \sqrt{1+b}~(\bu^{(i)} - \bar{\bu})\right\|^2\nonumber\\
&\le 2\left(N \|\bar{\bu}\|^2 + (1+b)\sum_{i=1}^N\big\|\bu^{(i)} - \bar{\bu}\big\|^2\right)\nonumber\\
&\le 2\Big(2N D^2 + b(N-1) \mathrm{Var}_{\|\cdot\|}(\bu)\Big)\nonumber\\
&\le 4ND^2\left(1 + b\frac{N-1}{N}\sigma^2\right)\,.\label{eq:variance_2}
\end{align}
In comparison to \Cref{eq:variance}, the bound is thus multiplied by $4$.
The rest of the proof (i.e. the optimization in $\eta$ and $b$) is similar to that of \Cref{prop:regret_ogd}, and we obtain that for all $\bu \in \bV_{\|\cdot\|, \sigma}$ it holds:
\[
R_T(\bu) \le DL_g\sqrt{1 + \sigma^2(N-1)}\sqrt{8T}\,.
\]
An interesting use case unlocked by the previous result is the use of the $p$-norm (for $p \in [1, 2]$) on the the probability simplex $\Delta =\{x \in \mathbb{R}_+^d\colon \sum_j x_j =1\}$.
Indeed, by a careful tuning of $p$ one can derive bounds scaling as $\sqrt{\ln d}$, instead of $d$ for OGD.
Interestingly, this improvement is orthogonal to our multitask acceleration, so that it is possible to benefit from both.
Recall that regularizer $\frac{1}{2}\|\cdot\|_p^2$ is $(p-1)$ strongly convex with respect to $\|\cdot\|_p$ \citep[Lemma 17]{shalev2007}, and that the dual norm of $\|\cdot\|_p$ is $\|\cdot\|_q$, with $q$ such that $1/p + 1/q = 1$.
Consider $V = \Delta$, and loss functions such that $\|g_t\|_\infty \le L_\infty$ for all $g_t \in \partial\ell_t(x)$, $x \in \Delta$.
One can check that $\|g_t\|_q^2 \le L_\infty^2 d^{2/q}$.
Then, substituting \Cref{eq:variance_2} into \Cref{eq:bound_gen_regret_cst} and using the previous remark, we get that for all $\bu \in \bm{\Delta}_{\|\cdot\|_p, \sigma}$ it holds
\begin{align*}
R_T(\bu) &\le \frac{2ND^2\left(1 + b\frac{N-1}{N}\sigma^2\right)}{\eta} + \frac{b+N}{(1+b)N} \frac{\eta}{2(p-1)} T L_\infty^2 d^{2/q}\\
&\le 2DL_\infty \frac{d^{1/q}}{\sqrt{p-1}}\sqrt{\frac{\big(1+b\frac{N-1}{N}\sigma^2\big)(b+N)}{1+b}T}\,,
\end{align*}
with the choice $\eta = 2ND \sqrt{\frac{(1+b)\left(1 + b\frac{N-1}{N}\sigma^2\right)}{b+N}\frac{(p-1)}{TL_\infty^2d^{2/q}}}$.
Hence, the bound obtained is the product of two terms, one depending only on $b$, the other only on $p$.
We can thus optimize independently.
The term in $b$ is the same as in previous proofs, we can reuse the analysis made for \Cref{prop:regret_ogd}.
The term in $d$ is the same as in the original proof \cite{grove1997,gentile2003}, and the optimization is thus similar.
We repeat it here for completeness.
One has $d^{1/q} / \sqrt{p-1} = \sqrt{q-1}d^{1/q} \le \sqrt{q}d^{1/q}$.
By differentiating with respect to $q$, the last term is minimized for $q = 2 \ln d$, with a value of $\sqrt{2e \ln d}$.
The final bound obtained is
\[
L_\infty \sqrt{1+\sigma^2(N-1)}\sqrt{16e~T \ln d}\,.
\]

We conclude with a few remarks.
First, in order to ensure that $q \ge 2$ (we need $p \le 2$), we may assume that $d \ge 3$.
Second, note that the improvement on the dependence with respect to $d$ comes at the price of a stronger variance condition, as we have $\mathrm{Var}_{\|\cdot\|_2}(\bu) \le \mathrm{Var}_{\|\cdot\|_p}(\bu)$ for all $p \le 2$.
If one is interested in a condition independent from $d$ (indeed $p$ depends on $q$, which depends on $d$), the variance with respect to $\|\cdot\|_1$ can be used.
Finally, note that we have used $D = \sup_{x \in \Delta} \|x\|_p \le \sup_{x \in \Delta} \|x\|_1 = 1$.
\qed
\subsection{Proof of Proposition~\ref{prop:adapt}}
This proposition builds upon the second claim of \Cref{thm:gen_regret}.
We start by detailing the proof for the specific choice $\|\cdot\| = \|\cdot\|_2$.
Observe that for $\psi = \frac{1}{2}\|\cdot\|_2^2$ and all $\bu, \bx \in \bV_{\|\cdot\|_2, \sigma}$ we have
\begin{equation*}
B_\bpsi\big(\sqA\bu, \sqA \bx\big) = \frac{1}{2}\|\bu - \bx\|_{\bA}^2 \le \|\bu\|_{\bA}^2 + \|\bx\|_{\bA}^2 \le 2ND^2\left(1 + b\frac{N-1}{N}\sigma^2\right)\,.
\end{equation*}
Substituting into \Cref{eq:bound_gen_regret_var}, we obtain
\begin{align*}
R_T(\bu) &\le \frac{2ND^2\left(1 + b\frac{N-1}{N}\sigma^2\right)}{\eta_T} + \frac{b+N}{2(1+b)N} \sum_{t=1}^T \eta_t \|g_t\|_2^2\\
&= \frac{b+N}{(1+b)N}\left(\frac{\bar{D}^2}{2\eta_T} + \frac{1}{2} \sum_{t=1}^T \eta_t \|g_t\|_2^2\right)\,,
\end{align*}
with $\bar{D}^2 = \frac{4N^2D^2(1+b)\left(1 + b\frac{N-1}{N}\sigma^2\right)}{b+N}$.
Using $\eta_t = \frac{\sqrt{2}\bar{D}}{2\sqrt{\sum_{i=1}^t \|g_i\|_2^2}} = ND\sqrt{\frac{2 (1+b)\left(1 + b\frac{N-1}{N}\sigma^2\right)}{(b+N)\sum_{i=1}^t \|g_i\|_2^2}}$, \citep[Lemma~4.13]{orabona2019} gives
\begin{align}
R_T(\bu) \le \frac{b+N}{(1+b)N}\sqrt{2}\bar{D}\sqrt{\sum_{t=1}^T \|g_t\|_2^2}
\le D \sqrt{\frac{(b+N)\left(1 + b\frac{N-1}{N}\sigma^2\right)}{1+b}}\sqrt{8\sum_{t=1}^T \|g_t\|_2^2}\,.
\label{eq:regret_grad}
\end{align}
Choosing $b=N$ concludes the proof.
In comparison to \Cref{prop:regret_ogd} (and assuming that $\|g_t\|_2 \le L_g$), the bound is multiplied by $\sqrt{8}$.
One $\sqrt{2}$ multiplication is due to the choice of $\eta_t$, while the other $\sqrt{4}$ multiplication comes from the upper bound on $\max_{t \le T} B_\bpsi\big(\sqA\bu, \sqA \bx_t\big)$, which is $4$ times bigger than the upper bound of $B_\bpsi\big(\sqA\bu, \sqA \bx_1\big)$.
The proof for any norm follows the same path.
The only difference is on bounding $\max_{t \le T} B_\bpsi\big(\sqA\bu, \sqA \bx_t\big)$, which is $4$ times bigger than the same quantity for the Euclidean case, see \Cref{eq:variance_2}.
Therefore, an additional $\sqrt{4} = 2$ factor is added.
\qed
\subsection{Proof of Proposition~\ref{prop:smooth}}

Using \Cref{eq:regret_grad} with $b = N$, the smoothness of the losses gives
$$
R_T(\bu) \le 4D\sqrt{1 + \sigma^2(N-1)} \sqrt{M \sum_{t=1}^T \ell_t\big(\bx_t^{(i_t)}\big)}\,.
$$
Using Lemma 4.20 in \cite{orabona2019}, we obtain
$$
R_T(\bu) \le 8D \sqrt{1 + \sigma^2(N-1)}\left(2MD\sqrt{1 + \sigma^2(N-1)} + \sqrt{M \sum_{t=1}^T \ell_t\big(\bu^{(i_t)}\big)}\right)\,.
$$
As for \Cref{prop:adapt}, the claim for general losses is obtained by multiplying the bound by $2$.
\qed
\subsection{Proof of Proposition~\ref{prop:regret_eg}}
Let $\bx_1 = [x^*, \ldots, x^*]$, and observe that $\bA(b)^{1/2} \bx_1 = \bx_1$.
Then, for all $\bu \in \bm{\Delta}$ such that $\bA(b)^{1/2} \bu \in \bm{\Delta}$ we have
\begin{equation}\label{eq:var_delta}
B_\bpsi\Big(\bA(b)^{1/2} \bu, \bA(b)^{1/2} \bx_1\Big) = B_\bpsi\Big(\bA(b)^{1/2} \bu, \bx_1\Big) = \sum_{i=1}^N B_\psi\Big(\big(\bA(b)^{1/2} \bu\big)^{(i)}, x^*\Big) \le NC\,.
\end{equation}
Plugging \Cref{eq:var_delta} into \Cref{eq:bound_gen_regret_cst}, we obtain for all $\bu \in \bm{\Delta}$:
\begin{equation}\label{eq:bound_egg}
R_T(\bu) \le \frac{NC}{\eta} + \frac{\eta (b+N)}{2\lambda(1+b)N}TL_g^2 \le L_g\sqrt{\frac{2}{\lambda}\frac{b+N}{b+1}CT}\,,
\end{equation}
where we have set $\eta = \frac{N}{L_g}\sqrt{\frac{2\lambda(1+b)C}{(b+N)T}}$.
The next natural question is \textit{how to choose $b$?}
As bound~\Cref{eq:bound_egg} is decreasing in $b$, one is encouraged to choose $b$ as large as possible.
However, recall that \Cref{eq:var_delta} requires $\bA(b)^{1/2} \bu$ to be in $\bm{\Delta}$.
So the optimal choice is $b^* = \max \{b \ge 0 \colon \bA(b)^{1/2} \bu \in \bm{\Delta}\}$.
This value unfortunately depends on $\bu$ and cannot be used uniformly over $\bm{\Delta}$.
However, the variance condition for $\bm{\Delta}_\sigma$ allows to choose a global $b$, as we show now.
Let $\bu \in \bm{\Delta}_\sigma$.
Recall that for all $i \le N$ we have
\[
\big(\bA(b)^{1/2} \bu\big)^{(i)} = \sqrt{1 + b}~\bu^{(i)} + (1 - \sqrt{1+b})\bar{\bu}\,.
\]
We have to check that these vectors are in the simplex for all $i \le N$.
There are two conditions that a vector $x$ should verify to be in the simplex
\[
(i)~~\mathbbm{1}^\top x = 1, \qquad \text{and} \qquad (ii)~~x_j \ge 0 \quad \forall j \le d.
\]
It is immediate to see that the first condition is always satisfied.
To analyze the second condition, we recall the following notation from \Cref{def:var_delta}
$$
\forall j \le d, \qquad \bu_j^\text{max} = \max_{i \le N} ~ \bu_j^{(i)}, \qquad \bu_j^\text{min} = \min_{i \le N} ~ \bu_j^{(i)}\,.
$$
Now, let $j \le d$.
A sufficient condition for $(ii)$ to hold for every $\big(\bA(b) \bu\big)^{(i)}$, $i \le N$, is
\begin{align*}
0 &\le \sqrt{1+b}~\bu^\text{min}_j + (1 - \sqrt{1 + b}) \bu^\text{max}_j\,,
%
%
%
\end{align*}
Or, equivalently,
\[
b \le \left(\frac{\bu^\text{max}_j}{\bu^\text{max}_j - \bu^\text{min}_j}\right)^2 - 1\,.
\]
Since this condition must hold for every $j \le d$, the overall condition is
\[
b ~\le~ \min_{j \le d} \left(\frac{\bu^\text{max}_j}{ \bu^\text{max}_j - \bu^\text{min}_j}\right)^2 - 1 = \frac{1}{\sigma^2} - 1 = \frac{1 - \sigma^2}{\sigma^2}.
\]
This condition is quite intuitive.
If all best models are equal, $\bu_j^\text{min} = \bu_j^\text{max}$ for all $j \le d$, one can choose $b = +\infty$, and achieves a bound independent from $N$.
On the contrary, if there exists $j \le d$ such that $\bu_j^\text{max} = 1$ and $\bu_j^\text{min} = 0$, i.e., two different corners are linked, then $b = 0$ is the only possible choice, and a $\sqrt{N}$ dependence is unavoidable.
Finally, with $b = (1 - \sigma^2) /\sigma^2 \ge 0$, bound \Cref{eq:bound_egg} becomes
\[
L_g\sqrt{1 + \sigma^2(N-1)}\sqrt{2CT/\lambda}\,.
\]
\qed
\subsection{Proof of Theorem \ref{thm:adaptivity}}

Let $\varepsilon > 0$, and consider the covering $\mathcal{C}_\varepsilon = \{\varepsilon, 2\varepsilon, \ldots, 1\}$, of cardinality $1/\varepsilon$.
Let $\bu \in \bV$, we want to derive an upper bound of the regret achieved by the best expert in $\mathcal{C}_\varepsilon$ against $\bu$.
First, assume that $\mathrm{Var}_{\|\cdot\|_2}(\bu) \le 1$, and let $\bar{\sigma}^2 = \inf \{z \in \mathcal{C}_\varepsilon \colon \mathrm{Var}_{\|\cdot\|_2}(\bu) \le z\}$. 
Note that by definition we have $\mathrm{Var}_{\|\cdot\|_2}(\bu) \le \bar{\sigma}^2 \le \mathrm{Var}_{\|\cdot\|_2}(\bu) + \varepsilon$.
Now, let us bound the regret of MT-OGD run with $\sigma^2 = \bar{\sigma}^2$.
Recall that for any fixed learning rate $\eta$, the regret of MT-OGD is bounded by
\begin{equation}\label{eq:regret_decompo}
\frac{ND^2\big(1 + (N-1)\mathrm{Var}_{\|\cdot\|_2}(\bu)\big)}{2\eta} + \eta \frac{TL_g^2}{N} \le \frac{ND^2\big(1 + (N-1)\bar{\sigma}^2\big)}{2\eta} + \eta \frac{TL_g^2}{N}\,.
\end{equation}
MT-OGD run with $\sigma^2 = \bar{\sigma}^2$ uses $\eta = \frac{ND}{L_g}\sqrt{\frac{1 + (N-1)\bar{\sigma}^2}{2T}}$.
Plugging this into \Cref{eq:regret_decompo}, we get that the regret is upper bounded by
\begin{align}
DL_g\sqrt{1 + \bar{\sigma}^2(N-1)}\sqrt{2T} &\le DL_g\left(1 + \sqrt{\mathrm{Var}_{\|\cdot\|_2}(\bu) \cdot N} + \sqrt{\varepsilon N}\right)\sqrt{2T}\nonumber\\
&= DL_g\left(1 + \sqrt{\min\big\{\mathrm{Var}_{\|\cdot\|_2}(\bu), 1\big\} \cdot N} + \sqrt{\varepsilon N}\right)\sqrt{2T}\,.\label{eq:bound_small_var}
\end{align}
On the other hand, if $\mathrm{Var}_{\|\cdot\|_2}(\bu) \ge 1$, we know that MT-OGD run with $\sigma^2 = 1$ is equivalent to independent OGDs and has a regret bounded by
\begin{align}
DL_g \sqrt{NT} &\le DL_g \left(1 + \sqrt{N} + \sqrt{\varepsilon N}\right)\sqrt{2T}\nonumber\\
&= DL_g \left(1 + \sqrt{\min\big\{\mathrm{Var}_{\|\cdot\|_2}(\bu), 1\big\} \cdot N} + \sqrt{\varepsilon N}\right)\sqrt{2T}\,.\label{eq:bound_big_var}
\end{align}
Combining \Cref{eq:bound_small_var} and \Cref{eq:bound_big_var}, we know that in any case the best expert in $\mathcal{C}_\varepsilon$ has always a regret against $\bu$ smaller than
\begin{equation}\label{eq:regret_best_exp}
DL_g \left(1 + \sqrt{\min\big\{\mathrm{Var}_{\|\cdot\|_2}(\bu), 1 \big\} \cdot N} + \sqrt{\varepsilon N}\right)\sqrt{2T}\,.
\end{equation}
Let us now compute the regret of Hedge-MT-OGD against the best expert in $\mathcal{C}_\varepsilon$.
By the analysis of Hedge with linear combination of the experts, we know that it is bounded by \citep{orabona2019}:
\begin{equation}\label{eq:regret_hedge}
\frac{\sqrt{2}}{2} L_\infty \sqrt{T \log(1 / \varepsilon)}\,,
\end{equation}
where $L_\infty$ is an upper bound of the infinity norm of the gradients received by Hedge.
The latter are equal to the vectors of losses incurred by the different experts at each time step.
With linear(ized) losses, and denoting by $\bx_t^\text{expert}$ the prediction of one expert at time step $t$, we have
\[
\ell_t\big(\bx_t^\text{expert}\big) = \left\langle \bg_t, \bx_t^\text{expert}\right\rangle = \big\langle g_t, \bx_t^{\text{expert}, (i_t)}\big\rangle  \le \|g_t\|_2 \cdot \big\|\bx_t^{\text{expert}, (i_t)}\big\|_2 \le DL_g\,.
\]
Hence, $L_\infty \le DL_g$.
Now, for any $\bu \in \bV$, the regret of Hedge-MT-OGD against $\bu$ is upper bounded by the sum of: (1) the regret of Hedge with respect to the best expert in $\mathcal{C}_\varepsilon$, that is upper bounded by \Cref{eq:regret_hedge}, and (2) the regret of the best expert in $\mathcal{C}_\varepsilon$ against $\bu$, that is upper bounded by \Cref{eq:regret_best_exp}.
Summing the two upper bounds and setting $\varepsilon = 1/N$ yields the desired result.
\qed
\subsection{Proof of Proposition~\ref{prop:multi_activations}}

Let $(\ell_{t, i})_{i \in \mathcal{A}_t}$ be the losses associated to the active agents at times step $t$.
For MT-OMD run with matrix $A = A(b) = (1+b)I_N - \frac{b}{N}\mathbbm{1}\mathbbm{1}^\top$, for $b \ge 0$, and constant learning rate $\eta > 0$, we have
\begin{align}
R_T(\bu) &= \sum_{t=1}^T \sum_{i \in \mathcal{A}_t} \ell_{t, i}\big(\bx_t^{(i)}\big) - \ell_{t, i}\big(\bu^{(i)}\big)\nonumber\\
&\le \frac{\bu^\top \bA \bu}{2\eta} + \frac{\eta}{2} \sum_{t=1}^T \|\bg_t\|_{\bA^{-1}}^2\nonumber\\
&= \frac{ND^2 \big(1 + b \frac{N-1}{N}\sigma^2\big)}{2\eta} + \frac{\eta}{2} \sum_{t=1}^T \|\bg_t\|_{\bA^{-1}}^2\,,\label{eq:regret_multi}
\end{align}
where $\bg_t \in \mathbb{R}^{Nd}$ is the compound gradient vector with non-zero blocks at the indices present in $\mathcal{A}_t$.
Recall that $A(b)^{-1} = \frac{1}{1+b}I_N + \frac{b}{1+b}\frac{\mathbbm{1}\mathbbm{1}^\top}{N}$.
We have
\begin{align*}
\|\bg_t\|_{\bA^{-1}}^2 &= \sum_{i, j} A^{-1}_{ij} \big\langle \bg_t^{(i)}, \bg_t^{(j)}\big\rangle\\
&= \sum_{i \in \mathcal{A}_t} A^{-1}_{ii} \big\|\bg_t^{(i)}\big\|_2^2 + \sum_{i \ne j \in \mathcal{A}_t} A^{-1}_{ij} \big\langle \bg_t^{(i)}, \bg_t^{(j)}\big\rangle\\
&\le \left(p \frac{b+N}{(1+b)N} + p(p-1) \frac{b}{(1+b)N}\right) L_g^2\\
&= pL_g^2 \frac{pb + N}{(1+b)N}
\end{align*}
Substituting into \eqref{eq:regret_multi} and setting $b = \sqrt{p}N$, we have
\[
R_T(\bu) \le \frac{ND^2\big(1 + \sqrt{p} \sigma^2 (N-1)\big)}{2\eta} + \frac{\eta }{2} TL_g^2 \frac{p(1+p)}{N}
\]
Setting $\eta = \frac{ND}{L_g\sqrt{T}}\sqrt{\frac{1 + \sqrt{p}\sigma^2(N-1)}{p(1+p)}}$, we finally get
\[
R_t(\bu) \le DL_g \sqrt{pT} \sqrt{1 + \sigma^2(N-1)} \sqrt{p^{1/2}+p^{3/2}}\,.
\]
%

\section{Derivation of the Algorithms}
\label{apx:algos}

In this appendix we gather all the technical details about the algorithms.

\subsection{Details on MT-OGD}
\label{apx:algo_ogd}
With the change of feasible set, $\bx_{t+1}$ produced by MT-OGD is the solution to
$$
\begin{aligned}
\min_{\bx \in \mathbb{R}^{Nd}} \quad &\big\|\bx_t - \eta_t \Ai \bm{g}_t - \bx\big\|_\bA^2\,,\\
\text{s.t.} ~\quad~ & \|\bx\|_\bA^2 \le (1 + b\sigma^2)ND^2\,,
\end{aligned}
$$
or again
$$
\begin{aligned}
\min_{\bx \in \mathbb{R}^{Nd}} \quad &\big\|\sqA\bx_t - \eta_t \sqAi \bm{g}_t - \sqA\bx\big\|_2^2\,,\\
\text{s.t.} ~\quad~ & \big\|\sqA\bx\big\|_2^2 \le (1 + b\sigma^2)ND^2\,.
\end{aligned}
$$
Introducing the notation $\by_t = \sqA \bx_t$, the update on the $\by_t$'s writes
$$
\by_{t+1} = \mathrm{Proj}\Big(\by_t - \eta_t \sqAi \bm{g}_t, \sqrt{(1 + b\sigma^2)N}D\Big)\,.
$$

\subsection{Details on MT-EG}
\label{apx:algo_eg}
Recall that for the negative entropy regularizer we have $\displaystyle B_\bpsi(\bx, \by) = \sum_{i=1}^N\sum_{j=1}^d \bx^{(i)}_j \ln \frac{\bx^{(i)}_j}{\by^{(i)}_j}$.
Expliciting the objective function of the first update, we obtain
\begin{equation*}
\langle \eta \sqAi \bm{g}_t, \by\rangle + B_{\bm{\psi}}(\by, \by_t) = \sum_{i=1}^N\sum_{j=1}^d~\eta A^{-1/2}_{ii_t} g_{t, j}~ \by^{(i)}_j + \by^{(i)}_j \ln \frac{\by^{(i)}_j}{\by^{(i)}_{t, j}}\,.
\end{equation*}
For all $i \le N$ and $j \le d$, differentiating with respect to $\by^{(i)}_j$ and setting the gradient to $0$ we get
\begin{equation}\label{eq:update_entropic_3}
\tilde{\by}^{(i)}_{t+1, j} = \by^{(i)}_{t, j} ~ \exp\big(-\eta A^{-1/2}_{ii_t} g_{t, j} - 1\big)\,.
\end{equation}
We focus now on the second update\Cref{eq:update_2}.
The constraint $\by \in \sqA(\bDelta)$ rewrites $\sqAi \by \in \bDelta$, or equivalently
\begin{align*}
&\forall i \le N,  &&\mathbbm{1}^\top \left(\sqAi \by\right)^{(i)} = 1\,,&&&&\\
&\forall i \le N,~j\le d, &&~~~~\left(\sqAi \by\right)^{(i)}_j \ge 0\,.&&&&
\end{align*}
We introduce matrix $Y \in \mathbb{R}^{N \times d}$ such that $Y_{kl} = \by^{(k)}_l$.
Then it holds for all $i \le N$
\begin{equation*}
\mathbbm{1}^\top \left(\sqAi \by\right)^{(i)} = \sum_{l=1}^d \left(\sqAi \by\right)^{(i)}_l = \sum_{l=1}^d \sum_{k=1}^N A^{-1/2}_{ik} Y_{kl} = \sum_{l=1}^d \big[A^{-1/2} Y\big]_{il} = \big[A^{-1/2} Y \mathbbm{1}\big]_i\,.
\end{equation*}
Hence, using that $A^{-1/2}$ is stochastic, the first constraint rewrites
\begin{equation}\label{eq:constraint_1_new}
A^{-1/2}Y\mathbbm{1} = \mathbbm{1} \iff Y\mathbbm{1} = \mathbbm{1} \iff \forall i \le N, \quad \mathbbm{1}^\top \by^{(i)} = 1\,.
\end{equation}
Similarly, the second constraint reads:
\begin{equation}\label{eq:constraint_2_new}
\forall k\le N,~j\le d, \qquad \big[A^{-1/2}Y\big]_{kj} \ge 0\,.
\end{equation}
The Lagrangian associated to Problem \Cref{eq:update_2} writes
\begin{align*}
\mathcal{L}(\by, \Lambda, \bm{\mu}) &= \sum_{i=1}^N\sum_{j=1}^d ~\by^{(i)}_j\ln \frac{\by^{(i)}_j}{\tilde{\by}^{(i)}_{t+1, j}} - \sum_{k=1}^N\sum_{j=1}^d \Lambda_{kj} \big[A^{-1/2}Y\big]_{kj} + \sum_{i=1}^N \mu_i(\mathbbm{1}^\top \by^{(i)} - 1)\\
&= \sum_{i=1}^N\sum_{j=1}^d ~Y_{ij}\ln \frac{Y_{ij}}{\tilde{\by}^{(i)}_{t+1, j}} - \sum_{k=1}^N\sum_{j=1}^d\sum_{i=1}^N \Lambda_{kj} A^{-1/2}_{ki} Y_{ij} + \sum_{i=1}^N\sum_{j=1}^d \mu_i Y_{ij} - \mathbbm{1}^\top \bm{\mu}\,,
\end{align*}
with $\Lambda \in \mathbb{R}^{N \times d}$ and $\bm{\mu} \in \mathbb{R}^N$ the Lagrange multipliers associated to constraints \Cref{eq:constraint_2_new} and \Cref{eq:constraint_1_new} respectively.
For all $i \le N$, $j \le d$, differentiating with respect to $Y_{ij}$ and setting the gradient to $0$ yields
\begin{equation}\label{eq:update_intermediate}
\left[Y^{(t+1)}\right]_{ij} \coloneqq \by^{(i)}_{t+1, j} = \tilde{\by}^{(i)}_{t+1, j}~e^{\big[A^{-1/2} \Lambda\big]_{ij} - \mu_i - 1}\,.
\end{equation}
Furthermore, the complementary slackness writes for all $i \le N$ and $j \le d$
\[
\Lambda_{ij} \big[A^{-1/2}Y^{(t+1)}\big]_{ij} = 0\,.
\]
However, \Cref{eq:update_intermediate} gives $\left[Y^{(t+1)}\right]_{ij} > 0$, and matrix $A^{-1/2}$ is stochastic (see \Cref{lem:sto}), so we have $\big[A^{-1/2}Y^{(t+1)}\big]_{ij} > 0$, and consequently $\Lambda_{ij} = 0$.
Then
\[
\by^{(i)}_{t+1, j} = \tilde{\by}^{(i)}_{t+1, j}~e^{- \mu_i - 1}\,.
\]
Using $\mathbbm{1}^{\top} \by^{(i)}_{t+1} = 1$, we get $e^{-\mu_i-1} = 1 / \big(\sum_{j=1}^d \tilde{\by}^{(i)}_{t+1, j}\big)$.
Substituting \Cref{eq:update_entropic_3} we get for all $i \le N$
\begin{align*}
\by^{(i)}_{t+1, j} &= \frac{\tilde{\by}^{(i)}_{t+1, j}}{\sum_{k=1}^d \tilde{\by}^{(i)}_{t+1, k}}\\[0.4cm]
&= \frac{ \by^{(i)}_{t, j} ~ e^{-\eta A^{-1/2}_{ii_t} g_{t, j} - 1} }{ \sum_{k=1}^d \by^{(i)}_{t, k} ~ e^{-\eta A^{-1/2}_{ii_t} g_{t, k} - 1} }\\[0.4cm]
&= \frac{ \by^{(i)}_{t, j} ~ e^{-\eta A^{-1/2}_{ii_t} g_{t, j}} }{ \sum_{k=1}^d \by^{(i)}_{t, k} ~ e^{-\eta A^{-1/2}_{ii_t} g_{t, k}} }\,.
\end{align*}
\bigskip
\begin{lemma}\label{lem:sto}
Let $A = I_N + L$, where $L$ is the Laplacian matrix associated to any weighted undirected graph on $\{1, \ldots, N\}$.
Then $A^{-1}$ and $A^{-1/2}$ are (doubly) stochastic matrices.
\end{lemma}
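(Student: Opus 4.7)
\medskip

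\noindent\textbf{Proof proposal for Lemma~\ref{lem:sto}.}

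The plan is to treat the two defining properties of a doubly stochastic matrix separately: (i) the rows and columns sum to $1$, and (ii) every entry is nonnegative. Property (i) follows straight from the definition of the graph Laplacian: since $L$ is a weighted undirected graph Laplacian, we have $L\mathbbm{1}=0$ and hence $A\mathbbm{1}=\mathbbm{1}$, so $\mathbbm{1}$ is an eigenvector of $A$ with eigenvalue $1$. By the spectral mapping theorem applied to the functions $x\mapsto x^{-1}$ and $x\mapsto x^{-1/2}$ (both well defined on the spectrum of $A$, since $A\succeq I$), this eigenrelation transfers verbatim: $A^{-1}\mathbbm{1}=\mathbbm{1}$ and $A^{-1/2}\mathbbm{1}=\mathbbm{1}$. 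Symmetry of $A$ (and hence of $A^{-1}$ and $A^{-1/2}$) then gives the matching left eigenrelation, so row and column sums equal $1$.

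For property (ii) on $A^{-1}$, the key observation is that $A$ is a symmetric M-matrix: $A_{ii}=1+\sum_{j\neq i}W_{ij}>0$ and $A_{ij}=-W_{ij}\le 0$ for $i\neq j$. Setting $c=\max_i L_{ii}$ and $B=cI-L$, the matrix $B$ has nonnegative entries, and $A=(1+c)I-B$. Writing
\[
A^{-1}=\frac{1}{1+c}\Bigl(I-\tfrac{B}{1+c}\Bigr)^{-1}=\frac{1}{1+c}\sum_{k\ge 0}\Bigl(\tfrac{B}{1+c}\Bigr)^{\!k},
\]
which converges because the spectrum of $B/(1+c)$ lies in $[0,c/(1+c))\subset[0,1)$, exhibits $A^{-1}$ as a sum of entrywise nonnegative matrices.

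The main obstacle is entrywise nonnegativity of $A^{-1/2}$, since the square-root is not polynomial in $A$ and the Neumann trick above no longer applies. The plan is to use the integral representation
\[
A^{-1/2}=\frac{1}{\sqrt{\pi}}\int_0^\infty t^{-1/2}\,e^{-tA}\,dt,
\]
valid because $A\succeq I$ guarantees convergence at $\infty$ while $t^{-1/2}$ is integrable at $0$. It then suffices to show that $e^{-tA}$ has nonnegative entries for every $t\ge 0$. Using $e^{-tA}=e^{-t}e^{-tL}$ and the same shift $-L=(cI-L)-cI$ with $cI-L\ge 0$ entrywise (since its off-diagonal is $W_{ij}\ge 0$ and its diagonal is $c-L_{ii}\ge 0$), we obtain
\[
e^{-tL}=e^{-tc}\exp\bigl(t(cI-L)\bigr)=e^{-tc}\sum_{k\ge 0}\frac{t^k(cI-L)^k}{k!},
\]
which is a sum of nonnegative matrices and hence nonnegative. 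Plugging back into the integral representation and using that $t^{-1/2}\ge 0$ yields $A^{-1/2}\ge 0$ entrywise, completing the proof.

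Combining (i) and (ii) for each of $A^{-1}$ and $A^{-1/2}$ gives the claim. The only delicate step is the entrywise nonnegativity of $A^{-1/2}$; everything else is either a direct eigenvector computation or a standard M-matrix Neumann series.
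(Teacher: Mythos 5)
Your proof is correct, and for the one genuinely delicate step --- entrywise nonnegativity of $A^{-1/2}$ --- it takes a different route from the paper. The paper handles both inverses through M-matrix theory: it observes that $A$ is a nonsingular M-matrix (a $Z$-matrix with positive eigenvalues), invokes the fact that inverses of M-matrices are entrywise nonnegative, and then cites a theorem of Alefeld and Schneider that the square root of an M-matrix is again an M-matrix, so that $A^{-1/2}=(A^{1/2})^{-1}\ge 0$. You instead prove $A^{-1}\ge 0$ by a Neumann series and $A^{-1/2}\ge 0$ via the integral representation $A^{-1/2}=\pi^{-1/2}\int_0^\infty t^{-1/2}e^{-tA}\,dt$ together with positivity of the heat semigroup $e^{-tL}$. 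Your argument is self-contained and avoids the nontrivial external result on M-matrix square roots, at the cost of more machinery on the page; the paper's is shorter but leans on the citation. One small inaccuracy to fix: with $B=cI-L$ and $c=\max_i L_{ii}$, the spectrum of $B/(1+c)$ need not lie in $[0,c/(1+c))$, since $\lambda_{\max}(L)$ can exceed $c$ (it is only bounded by $2c$, e.g.\ by Gershgorin), so $B$ may have negative eigenvalues. What you actually need for the Neumann series is only that the spectral radius of $B/(1+c)$ is below $1$, and that does hold because $\lambda(L)\in[0,2c]$ forces $\lambda(B)\in[-c,c]$; state it that way and the step is airtight. The same remark is irrelevant to the $e^{-tL}$ computation, where only entrywise nonnegativity of $cI-L$ is used.
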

\begin{proof}
It is immediate to see from the definition of $A$ that $A^{-1}$ and $A^{-1/2}$ are both symmetric and satisfy $A^{-1}\mathbbm{1} = A^{-1/2}\mathbbm{1} = \mathbbm{1}$.
It remains to check that their entries are nonnegative.
To that end, we use that inverses of $M$-matrices are entrywise nonnegative.
Matrix $A$ is a non-singular $M$-matrix, as its off-diagonal entries are nonpositive (i.e., $A$ is a $Z$-matrix) and its eigenvalues are positive.
As a consequence, $A^{1/2}$ is also a $M$-matrix \citep[Theorem 4]{alefeld1982}, which concludes the proof.
\end{proof}

\section{Technical Comparison to \citet{cavallanti2010}}
\label{apx:cavallanti}
First, we would like to remind the reader that, unlike \citet{cavallanti2010}: ($i$) we tackle any subdifferentiable loss function, and not only the hinge loss for classification, ($ii$) we address any strongly convex regularizer, and not only the squared Euclidean norm, ($iii$) our proofs provide clear insights on the regularizer's behaviour, and are not just black box applications of the Kernel Perceptron Theorem, ($iv$) we provide (matching) lower bounds, ($v$) we develop what we believe is the correct generalization to $p$-norms, see technical details below, ($vi$) we provide a unifying framework and a general analysis that allow to deal with non-Euclidean geometries on the simplex, ($vii$) we are adaptive to the task variance $\sigma^2$.

About the $p$-norm extension, the two regularizers used in \citet{cavallanti2010} and this paper are fundamentally different, as they write respectively
\[
\bpsi(\bu) = \|\bA \bu\|_p^2\,, \text{\qquad and \qquad} \bpsi(\bu) = \sum_{i=1}^N \big\|(\bA^{1/2}u)^{(i)}\big\|_p^2\,.
\]
A first advantage of our method is that we recover the Euclidean framework by setting $p=2$, which is not the case for the regularizer used in \citet{cavallanti2010}.
This makes us believe that we propose the right way to address $p$-norms, and that a general theory of multitask acceleration is worth developing to avoid misinterpretations of this kind.
A second advantage is that our bounds have a much better dependence with respect to the task variance $\sigma^2$ and the number of tasks $N$.
Recall that after several approximations, see \Cref{apx:any_norm}, our bound for $p$-norms on the simplex features the variance term
\begin{equation}\label{eq:variance_us}
\sqrt{1 + (N-1)\sigma^2} \le 1 + \sqrt{N}\sigma\,,
\end{equation}
where $\sigma^2$ is an upper bound of the task variance according to $\|\cdot\|_1$, such that
\[
\mathrm{Var}_{\|\cdot\|_1}(\bu) = \frac{1}{N-1} \sum_{i=1}^N \big\|\bu^{(i)} - \bar{\bu}\big\|_1^2 \le \sigma^2\,.
\]
On the other hand, the bound of Theorem 6 in \citet{cavallanti2010} features the variance term
\begin{align}
\frac{\|\bA \bu\|_1}{N} &= \frac{1}{N}\sum_{i=1}^N \Big\|(1+N)\bu^{(i)} - N\bar{\bu}\Big\|_1\nonumber\\
&\le \frac{1}{N} \sum_{i=1}^N \left(\big\|\bu^{(i)}\big\|_1 + N \big\|\bu^{(i)} - \bar{\bu} \big\|_1\right)\nonumber\\
&\le 1 + \sum_{i=1}^N \big\|\bu^{(i)} - \bar{\bu} \big\|_1\nonumber\\
&\le 1 + \sqrt{N \sum_{i=1}^N \big\|\bu^{(i)} - \bar{\bu} \big\|_1^2}\nonumber\\
&\le 1 + N\sigma\,.\label{eq:variance_cav}
\end{align}
From \Cref{eq:variance_us,eq:variance_cav}, it is clear that our bounds exhibit a much better dependence with respect to $\sigma^2$ and N.
We further highlight that the bound of Theorem 6 in \citet{cavallanti2010} contains an extra term which is the square of \Cref{eq:variance_cav}, without being multiplied by any time-related quantity though.

\section{Additional Experiment}
\label{apx:exps}

We report the results of a synthetic experiment where we plot the multitask regret of MT-OGD against the standard deviation $\sigma$ of the reference vectors.
Specifically, we consider a regression problem in $\R^{2}$ with respect to the square loss. We set $D=1$ and generate $N=4$ tasks defining the reference vectors as in the proof of \Cref{prop:lower_bound} (see \Cref{apx:proof_lower_bound}), so that the task variance is set to the desired value. Each task consists of a sequence of $10$ losses, so that $T=40$. The losses for the $i$-th task are generated by first sampling an instance $x$ from the unit sphere centered at $\bm{1}$, and then computing its label as $y = \langle \bm{u}^{(i)}, x \rangle + \epsilon$, where $\epsilon$ is an independent Gaussian noise with variance $0.1$ and truncated in [-1, 1]. We plot the final multitask regret $R_T$ averaged over $30$ runs. MT-OGD is tuned with the optimal parameter choices for $b$ and $\eta$ according to the theory.
Results are shown in \Cref{fig:apx_exps}. As expected, the regret of MT-OGD increases linearly with $\sigma$, as suggested by the upper bound \Cref{eq:regret_ogd} and the lower bound in \Cref{prop:lower_bound}.
For the sake of comparison, we also benchmark IT-OGD and report its best average performance. The switch in performance occurs slightly before $\sigma = 1$, which is also expected as, aside from the dependence in~$\sigma$, the bounds for MT-OGD scales with $\sqrt{2T}$ (instead of $\sqrt{T}$ for IT-OGD) and are thus slightly worse for $\sigma = 1$.

\begin{figure}[h!]
\centering
\includegraphics[scale=0.48]{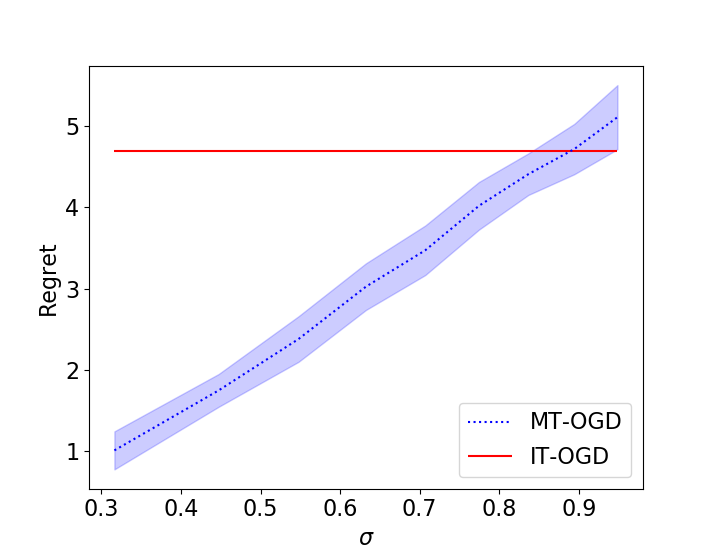}
\caption{Multitask regret $R_T$ of MT-OGD and IT-OGD for $T=40$ against the standard deviation $\sigma$.}
\label{fig:apx_exps}
\end{figure}

\end{document}